\newlength\imagewidth
\newlength\imagescale
\pretocmd\@bibitem{\color{black}\csname keycolor#1\endcsname}{}{\fail}
\newcommand\citecolor[1]{\@namedef{keycolor#1}{\color{black}}}
\newtheorem{definition}{Definition}
\newtheorem{ass}{Assumption}
\newtheorem{problem}{Problem}
\newtheorem{lemma}{Lemma}
\newtheorem{theorem}{Theorem}
\newtheorem{corollary}{Corollary}
\theoremstyle{definition}
\newcommand{\continuation}{??}
\DeclareMathOperator*{\argmax}{arg\,max}
\title{\LARGE \bf  Motion Planning Under Temporal Logic Specifications In Semantically Unknown Environments}
\author{Azizollah Taheri and Derya Aksaray
\thanks{A. Taheri and D. Aksaray are with the Department of Electrical and Computer Engineering, Northeastern University, Boston, MA, 02115.}
\thanks{This work was partially supported by ARL contract W911NF2420026.}
}
\begin{document}
\maketitle
\thispagestyle{empty}
\pagestyle{empty}

\begin{abstract}
This paper addresses a motion planning problem to achieve spatio-temporal-logical tasks, expressed by syntactically co-safe linear temporal logic specifications {($\text{scLTL}_{\setminus{\emph{next}}}$)}, in uncertain environments. Here, the uncertainty is modeled as some probabilistic knowledge on the semantic labels of the environment. For example, the task is ``first go to region 1, then go to region 2"; however, the exact locations of regions 1 and 2 are not known a priori, instead a probabilistic belief is available. We propose a novel automata-theoretic approach, where a special product automaton is constructed to capture the uncertainty related to semantic labels, and a reward function is designed for each edge of this product automaton. The proposed algorithm utilizes value iteration for online replanning. We show some theoretical results and present some simulations/experiments to demonstrate the efficacy of the proposed approach. 

\end{abstract}

\color{black}

\section{Introduction}
Autonomous navigation is essential for a wide range of applications from domestic robots to search and rescue missions\cite{lavalle2006planning}. Traditional motion planning focuses on generating robot trajectories that navigate from an initial state to a desired goal while avoiding obstacles. However, future's applications demand solutions to more complex tasks beyond simple point-to-point navigation. These tasks, such as sequencing, coverage, response, and persistent surveillance, can be expressed using temporal logic (TL), 
which provides a structured and compact way to define high-level mission requirements (e.g., \cite{plaku2012planning,lahijanian2011temporal}). The integration of TL in motion planning has led to the development of verifiable control synthesis methods that enable robots to satisfy desired TL constraints (e.g., \cite{fainekos2005hybrid}).

Existing planning algorithms under TL constraints often assume complete knowledge about the environment, which allow for the design of correct-by-construction controllers (e.g.,\cite{kress2007s,vasile2013sampling}). However, in real-world scenarios, robots often operate in partially or fully unknown environments, which requires the ability to adapt and replan as new information becomes available. For example, Fig. \ref{fig:exp} illustrates a scenario where a drone aims to first \color{black} visit \color{black} Region 1 then Region 2 while it doesn't exactly know where these regions are. To address these challenges, recent studies have focused on incorporating semantic uncertainties into the planning process (e.g., \cite{guo2013revising, kantaros2020reactive,haesaert2018temporal,hashimoto2022collaborative,kantaros2022perception, kloetzer2012ltl, 8272366}). This involves considering label uncertainties in the robot's environment, e.g., unknown regions and obstacles. 

In this paper, we introduce an automata-theoretic framework that addresses motion planning problems for a sub-class of Linear Temporal Logic (LTL) specifications in uncertain environments. The proposed framework leverages value iteration algorithm to compute the control policy based on the current probabilistic belief of the labels in the environment. We show that the desired co-safe LTL specification is satisfiable as long as there exists a way to satisfy it based on the initial belief.

\begin{figure}[t!]
    \centering
    \begin{tabular}{cc}
        \begin{subfigure}[t]{0.4\columnwidth}
            \centering
            \includegraphics[width=\textwidth]{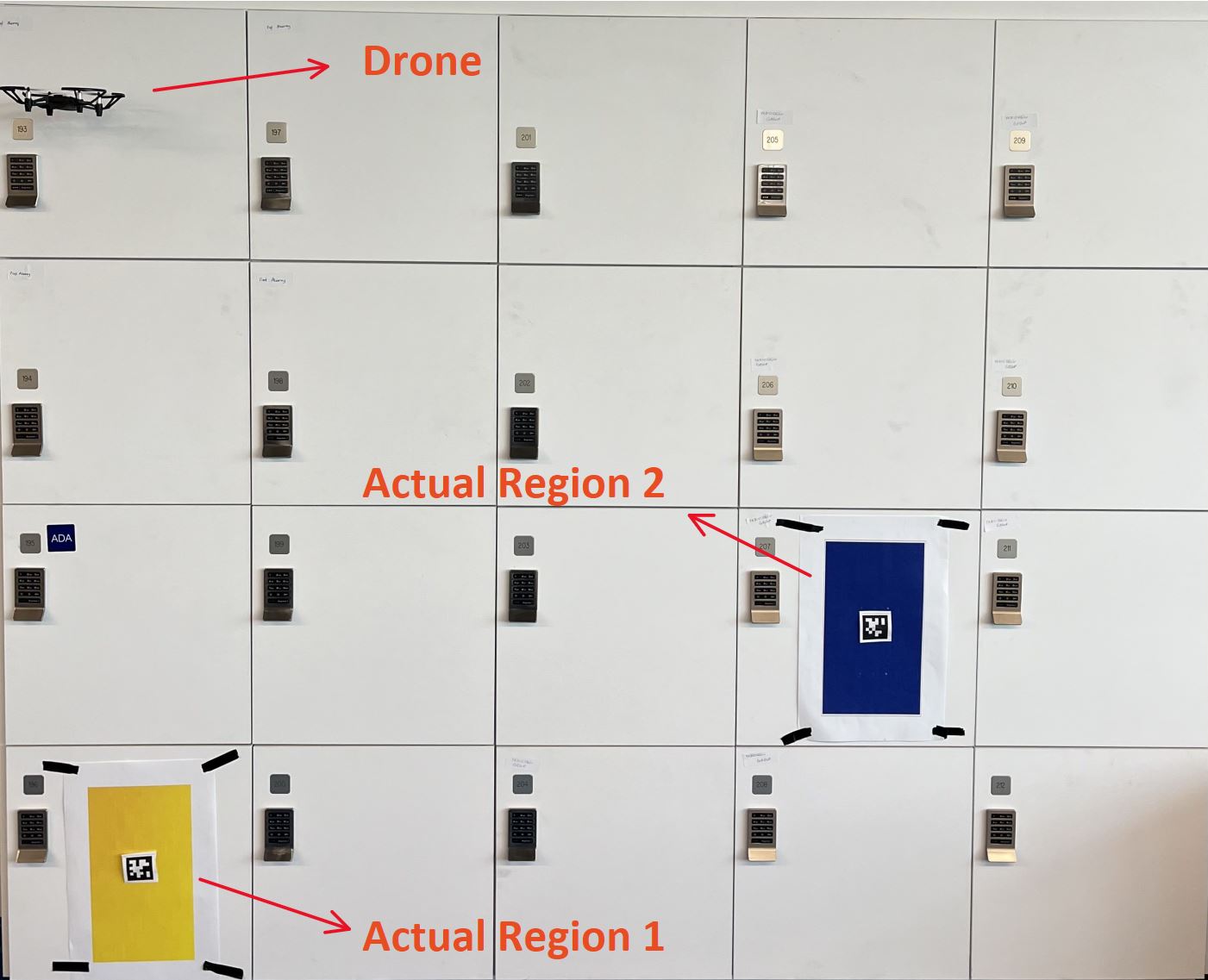}
            \caption{}  
        \end{subfigure} &
        \begin{subfigure}[t]{0.4\columnwidth}
            \centering
            \includegraphics[width=\textwidth]{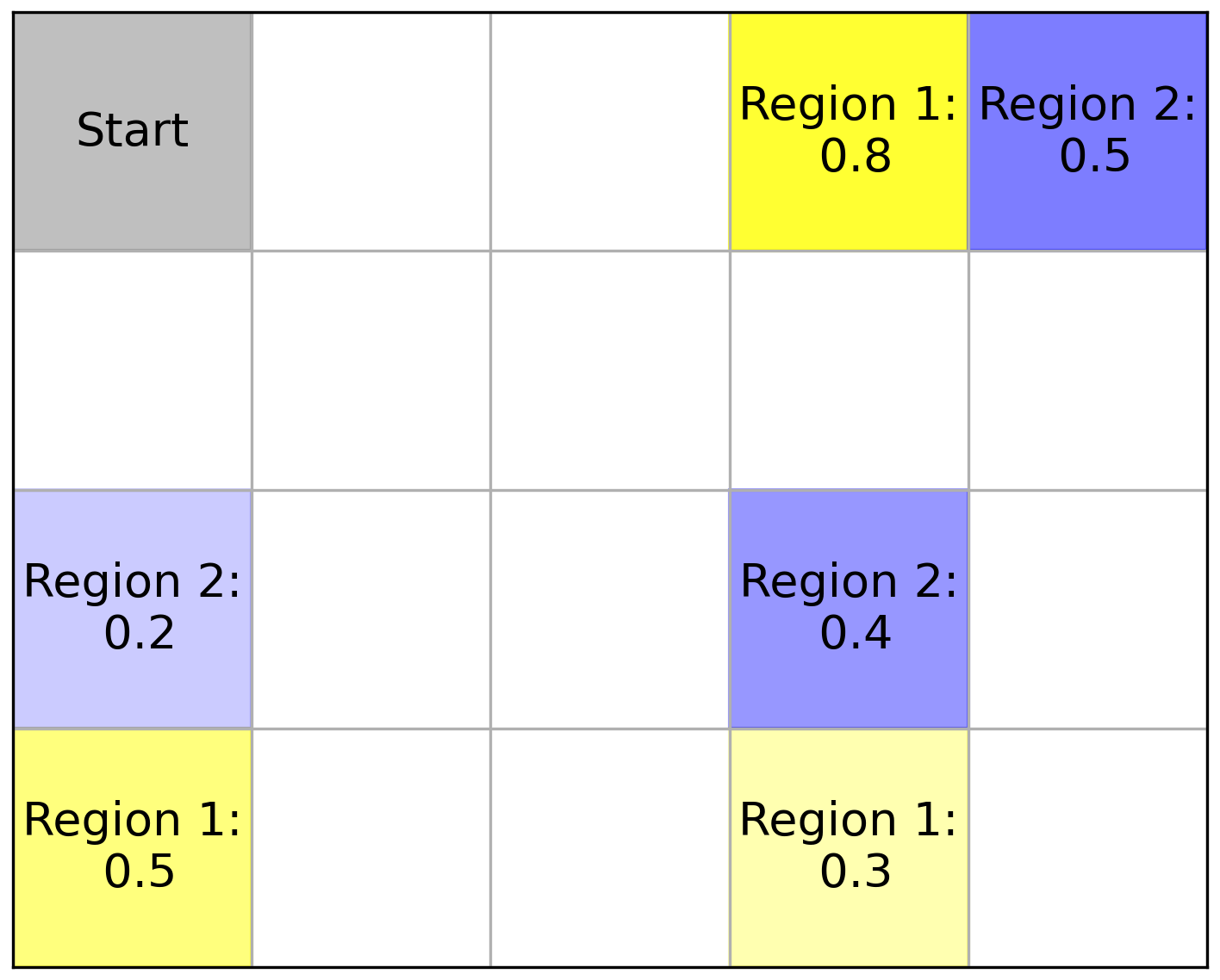}
            \caption{}  
        \end{subfigure}
    \end{tabular}
    \caption{\scriptsize (a) A drone in an environment aims to \color{black} visit \color{black} Region 1 first, and then Region 2, without prior knowledge of their exact locations. (b) The drone's initial belief about the possible locations of regions 1 and 2. \textcolor{black}{ The values represent independent probabilities for the cells, indicating the likelihood of a cell having that label (or no label otherwise).}}
    \label{fig:exp}
\end{figure}

\section{Related Work}
The works that are related to our paper are \cite{guo2013revising, kantaros2020reactive,haesaert2018temporal,hashimoto2022collaborative,kantaros2022perception, kloetzer2012ltl, 8272366}, where they also consider label uncertainty in the environment. However, the studies in\cite{guo2013revising} and\cite{kantaros2020reactive} do not consider probabilities over the semantic map. Instead, the authors in \cite{guo2013revising} construct an initial product graph (incorporating the physical state and task progress), and they revise it (pruning/adding edges) after obtaining new information about the unknown map. Alternatively, the authors in \cite{kantaros2020reactive} consider known regions of interest and unknown but static obstacles, whereas we consider a probability distribution for the labels associated with regions of interest. 

\color{black}
Similar to our paper, the works  \cite{haesaert2018temporal,hashimoto2022collaborative,kantaros2022perception,kloetzer2012ltl,8272366} consider probabilities over the labels in the environment. In \cite{haesaert2018temporal}, a planning problem is formulated as a partially observable Markov decision process (POMDP), where the robot is uncertain about both its current position and the location of target regions. In contrast, we assume that the robot has perfect knowledge of its current position and receives accurate observations at that location.
In \cite{hashimoto2022collaborative}, the authors address planning in an uncertain semantic map using a team consisting of a copter and a rover. The rover is responsible for satisfying an scLTL mission specification, while the copter assists by first exploring the environment and reducing uncertainty. In this setup, the copter can move freely without risking task violation, and the rover's safety relies on the information gathered by the copter. In contrast, we consider a single robot which individually ensures both safety and task satisfaction guarantees. In \cite{kantaros2022perception}, the authors propose a sampling-based approach, where probabilistic labels are transformed from probabilistic to certain using some user-defined thresholds. Here threshold-sensitive label assignments ignore states below the threshold and treat those above it as identical, which causes planning inefficiencies.
In contrast, our method considers all label uncertainties collectively during planning. In \cite{kloetzer2012ltl} and \cite{8272366}, the authors present an automata-theoretic method, which solves the planning problem by incorporating all probabilistic labels in the environment. Nonetheless, both \cite{kloetzer2012ltl} and \cite{8272366} consider a model for the probabilistic labels, which appear and disappear in some known states with some unknown frequency. 
As a result, their solutions rely on the fact that the robot stays in those states until the desired label appears. However, this approach can lead to a deadlock, where the robot waits indefinitely under the false assumption that a label exists in a state, even though it does not exist in reality. In contrast, our paper can address such scenarios where the robot may hold incorrect beliefs about the labels of the states. Moreover, our paper proposes an online approach which enables the robot to replan whenever new information is discovered whereas \cite{8272366} presents an offline approach where a linear program, that takes into account all probabilistic labels, is solved in one shot before the mission. Hence, \cite{8272366} may get stuck due to incorrect initial beliefs (as discussed in our benchmark analysis in Sec. VI) whereas our paper facilitates resilient planning in semantically uncertain environments. 



\color{black}
\vspace{-1mm}

\section{Preliminaries}
\subsection{Notation and Graph Theory}
\color{black}
Let $O$ be a set of Boolean statements. The power set (set of all subsets) of $O$ is denoted by $2^O$. The set of infinite sequences (words) defined over $O$ and $2^O$ are denoted by $O^\omega$ and $(2^O)^\omega$, respectively. The set of positive real numbers is denoted by $\mathbb{R}^{>0}$.

A directed graph is defined as a tuple $\mathcal{G} = (X, \Delta)$, where X represents the nodes and $\Delta \subset X \times X$ is a set of directed edges connecting these nodes. A node $x_j$ is considered an out-neighbor of another node $x_i$ if $(x_i, x_j) \in \Delta$. We use $N_{x_i}$ to represent the set of out-neighbors of $x_i$. In this paper, for brevity, we use the term ``neighbor" instead of ``out-neighbor". $N_{x_i}^h$ denotes the set of all nodes that can be reached from $x_i$ within at most $h$-hops.
\color{black}


\subsection{Probabilistically Labeled DMDP}
We define a Probabilistically Labeled Deterministic Markov Decision Process (PL-DMDP) as in \cite{8272366} but more compact, to model semantic uncertainty in an environment.
\begin{definition}[PL-DMDP]
\textit{ A probabilistically labeled deterministic MDP (PL-DMDP) is a tuple $\mathcal{M} = (X, \Sigma, \delta, O, L,l_{true}, p_L, c)$, where:
\begin{itemize}
    \item   $X$ is the finite set of states,
    \item   $\Sigma$ is a finite set of actions,
    \item   $\delta: X \times \Sigma \rightarrow X$ is a deterministic transition function,
    \item $O$ is the set of observations,
    \item $L : X \rightarrow 2^{2^O}$ is the labeling function,
    \item $l_{true}: X \rightarrow 2^O$ is the true label function,
    \item $p_L: X \times 2^O \rightarrow [0,1] $ is a mapping such that $p_L(x, l)$ indicates  the probability \textcolor{black}{ of seeing the set of observations $  l \in 2^O$ in state $x\in X$.} Note that for any $x\in X$,  $\sum_{  l\in L(x)}p_L(x,   l) = 1$ and if $  l\notin L(x)$ then $p_L(x,  l) = 0$,
    \textcolor{black}{\item $c: X \times X \rightarrow  \mathbb{R}^{>0}$ is the cost function.}
\end{itemize}
}
\end{definition}

An example of a PL-DMDP, where the knowledge about the label of each state is uncertain, is shown in \textcolor{black}{Fig. \ref{fig:figure2}}. Note that the standard labeled deterministic MDP (e.g.,\cite{9636598,10342259}) $\mathcal{M}_l = (X, \Sigma, \delta, O, L^*, c)$ is a special case of PL-DMDP, where $L^*: X \rightarrow 2^O$ is the labeling function and there is no non-determinism in labeling the states. When the label of a state is revealed, the $p_L$ function may change as new information is discovered, which makes both $p_L$ and the PL-DMDP time-varying. For simplicity, we denote them as $p_L$ and $\mathcal{M}$.

\color{black}
Given a PL-DMDP $\mathcal{M}$,  a finite \textbf{action sequence} is   $\color{black}\bm{\sigma{\scriptstyle[0:n]}} = \sigma(0)\sigma(1)\dots\sigma(n)$, where $\sigma(i)\in \Sigma$ for all $i \in\{0,1,\dots,n\}$;
\color{black} a finite \textbf{trajectory} generated by  $\bm{\sigma{\scriptstyle[0:n]}}$ is  $\color{black}\bm{x^{\sigma{\tiny[0:n]}}}=x(0)x(1)\dots x(n+1)$, where $x(i) \in X$; and the corresponding \textbf{word} is $\bm{  l({\color{black}\color{black}\bm{x^{\sigma{\tiny[0:n]}}}  })} =   l(0)  l(1) \dots   l(n+1)$, where $  l(i) = l_{true}(x(i))$. We define the cost of the  trajectory $\bm{x^{\sigma{\tiny[0:n]}}}$  as $C(\color{black}\color{black}\bm{x^{\sigma{\tiny[0:n]}}}  ) = \sum_{i=0}^{n}  c(x(i),x(i+1)) $. \textcolor{black}{In this paper, we consider a uniform cost function 
$c$, which can be interpreted as the time required to complete the mission under the assumption that each transition takes equal time.}
\color{black}

\subsection{Temporal logic}
\begin{wrapfigure}{r}{0.48\columnwidth}
    \centering
     \vspace{-10mm}
\centerline{\includegraphics[width=0.48\columnwidth]{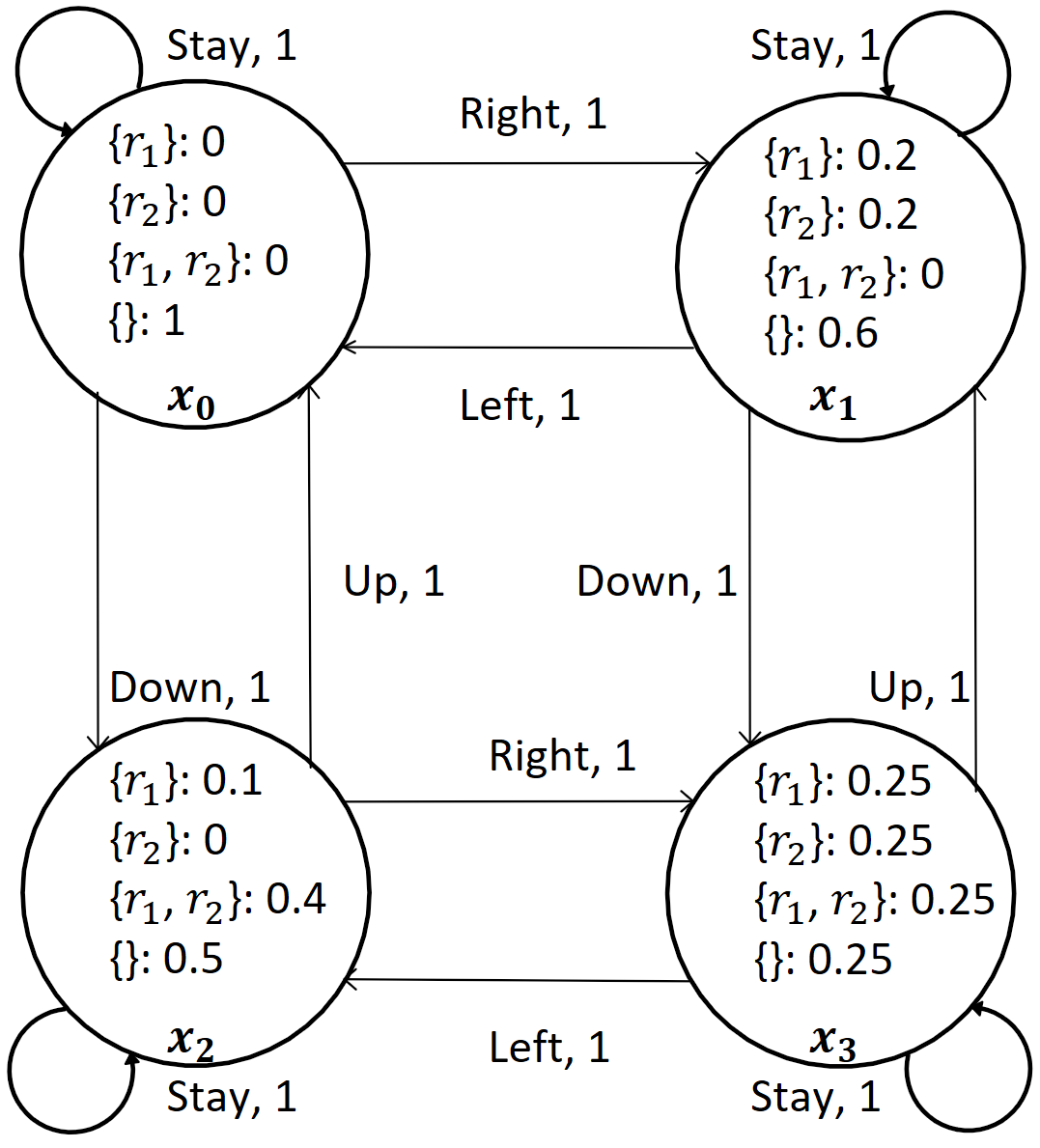}}
\caption{\scriptsize An illustration of PL-DMDP $\mathcal{M}_1 = (X, \Sigma, \delta, O, L, l_{true}, p_L, c)$ where $X=\{x_0, x_1, x_2, x_3\}$, $\Sigma=\{Up,Right,Down,Left,Stay\}$, and $O=\{r_1,r_2\}$. Some examples of $\delta$, $L$, and $p_L$ are $\delta(x_0, Right)=x_1$, $L(x_0) = \{{\{r_1\},\{r_2\},\{r_1,r_2\},\{\}}\}$, $p_L(x_1, \{r_1\}) = 0.2$, respectively.}
 \vspace{-5mm}
\label{fig:figure2}
\end{wrapfigure}
Temporal logic (TL) is a formal language used to define the temporal characteristics of a dynamical system. Linear Temporal Logic (LTL) is a type of TL that can handle words of infinite length $\bm{l} =   l(0)  l(1)  l(2)\dots$ where $  l(i)\in 2^O$ for all $i\geq0 $. LTL is widely employed in diverse domains (e.g.,\cite{4459804,fu2016optimal,guo2013revising,buyukkoccak2023state }) and can be used for verification and control synthesis in complex missions. \textcolor{black}{In this work, we focus on a specific fragment of LTL known as scLTL}.
\begin{definition}[$\text{scLTL}_{\setminus{\emph{next}}}$]   
\textit{A syntactically co-safe linear temporal logic ($\text{scLTL}_{\setminus{\emph{next}}}$) formula $\phi$ over a set of observations \textit{O} is recursively defined
as:
\begin{equation*}
\phi = \top\mid o\mid \neg o\mid \phi_1\lor\phi_2\mid\phi_1\land\phi_2\mid\phi_1\mathcal{U}\phi_2\mid\color{black}\lozenge\phi_1\color{black}
\end{equation*}
where $o\in O$ is an observation and $\phi$, $\phi_1$ and $\phi_2$ are $\text{scLTL}_{\setminus{\emph{next}}}$ formulae.
$\top$ (true), $\neg$ (negation), $\lor$ (disjunction) and $\land$ (conjunction) are Boolean operators, and $\mathcal{U}$ (until) and $\lozenge$ (eventually) are temporal operators.}
\end{definition}
The \emph{globally} operator cannot be represented in this language, since $\text{scLTL}_{\setminus{\emph{next}}}$ only allows for the negation of the observations. This means that expressions like $\neg \lozenge \neg \phi$ are not part of the $\text{scLTL}_{\setminus{\emph{next}}}$ fragment. \color{black}
We exclude the \emph{next} operator from the syntax (e.g., \cite{kantaros2020reactive}) because requiring tasks, like finding an object in $n$ steps when its location is unknown, is overly restrictive in uncertain environments. In the remainder of the paper, we will refer to $\text{scLTL}_{\setminus{\emph{next}}}$ as scLTL.
 
 \color{black}
 The semantics of scLTL formulae are interpreted over infinite words in $2^O$. We define \color{black}the \color{black} language of an scLTL formula $\phi$ as the set of infinite words satisfying $\phi$  and denote it as $\mathscr{L}_\phi
$. Even though scLTL formulae are interpreted over infinite words (i.e., over $({2^O})^\omega$), their satisfaction is guaranteed in finite time.
Any infinite word $\bm{  l} =   l(0)  l(1)  l(2)\dots$ that satisfies formula $\phi$ contains a finite “good”
prefix $  l(0))  l(1) \dots   l(n)$ such that any infinite word that contains the prefix, i.e.,
$  l(0)  l(1) \dots   l(n)  \bm{l^\prime} ,   \bm{l^\prime} \in (2^O)^\omega$
, satisfies $\phi$ \textcolor{black}{\cite{belta2017formal}}. We denote the language of
finite good prefixes of $\phi$ by $\mathscr{L}_{pref,\phi}$.
A deterministic finite state automaton (DFA) can be constructed from any scLTL formula\color{black}\cite{fuggitti-ltlf2dfa} \color{black} that compactly represents all the satisfactory words and defined as follows:

\begin{definition}[DFA] 
\textit{A deterministic finite state automaton (DFA) is a tuple $\mathcal{A} = (S, s_0, 2^O, \delta_{\color{black}a\color{black}}, F_a)$, where:
\begin{itemize}
\item $S$ is a finite set of states,
\item $s_0 \in S$ is the initial state,
\item $2^O$ is the alphabet,
\item $\delta_{\color{black}a\color{black}}: S \times 2^O \rightarrow S$  is a transition function,
\item $F_a \subseteq S$  is the set of accepting states.
\end{itemize}
}
\label{DFA}
\end{definition}

The semantics of a DFA are defined over finite input words in $2^O$. A run of DFA $\mathcal{A}$ over a word $\bm{  l} =  l(0)  l(1) \dots   l(n)$ is represented by a sequence $\bm{s} = s(0)s(1)\dots s(n+1)$ where $s(i)\in S$, $s(0) = s_0$ and $s(i+1) = \delta_{\color{black}a\color{black}}(s(i),  l(i))$ for all $i \geq 0$. If the corresponding run of the word $\bm{  l}$ ends in an accepting state, i.e., $s(n+1) \in F_a$, then we say the word is accepted. The language accepted by $\mathcal{A}$ is the set of all words accepted by $\mathcal{A}$ and is denoted by $\mathscr{L}_{\color{black}\mathcal{A}\color{black}}$.
An scLTL formula $\phi$ over a set $O$ can always be translated into a DFA $\mathcal{A}_\phi$ with alphabet $2^O$ that accepts all and only good prefixes of $\phi$ (i.e., $\mathscr{L}_{\mathcal{A}_\phi}
= \mathscr{L}_{pref,\phi}$ ) \cite{belta2017formal}. Note that the DFA described above only
encodes accepting words. However, one can construct a total
DFA in order to track the violation cases as well.


\begin{definition}[Total DFA]
\textit{A DFA is called total if for all $s \in S$ and any $  l \in 2^O$, the transition $\delta_{\color{black}a\color{black}}(s,   l )\neq \emptyset$ \cite{lin2015hybrid}.}
\end{definition}

For any given DFA $\mathcal{A}$, 
one can always create a language-equivalent \color{black}(also defined over finite input words) \color{black} total DFA by adding a trash state, referred to as $s_t$ and introducing a transition $\delta_{\color{black}a\color{black}}(s, l)= s_t$ if and only if  $\delta_{\color{black}a\color{black}}(s, l)= \emptyset$. 

\section{Problem Formulation}\label{sec:problem_formulation}
We consider a robot whose goal is to achieve an scLTL task $\phi$ in an environment where the semantic labels are static but initially unknown. \textcolor{black}{Accordingly, we model the robot's decision-making as a PL-DMDP $\mathcal{M} = (X, \Sigma, \delta, O, L,l_{true}, p_L, c)$ with unknown $l_{true}$} and a uniform cost function $c$, which has a value of $\beta > 0$.
\color{black} A standard way of formulating such scLTL planning problems (e.g., \cite{kantaros2020reactive}, \cite{kantaros2022perception}) is as follows:\color{black}
    



 
\vspace{-3mm}
\begin{subequations}
 \begin{align}
    & \hspace{-5mm}\min_{n,\color{black} \color{black} \color{black} \color{black} \bm{\sigma{\scriptstyle[0:n]}} \color{black} \color{black} \color{black} \color{black}}[ C(\color{black}\color{black}\bm{x^{\sigma{\tiny[0:n]}}} \color{black} \color{black})] \\
    &  x(0) = x_0, \sigma(i) \in \Sigma, \\
    & x(i+1) = \delta(x(i),\sigma(i)), \\
    & \bm{  l({\color{black}\color{black}\bm{x^{\sigma{\tiny[0:n]}}} \color{black} \color{black}})}\in \mathscr{L}_{pref,\phi},
\end{align}
\end{subequations}
 \noindent where the constraints (1b) and (1c) hold for all $i \in\{0,1,\dots,n\}$.
 Note that when $l_{true}$ is unknown (as in our case), the constraint (1d) $\bm{  l({\color{black}\color{black}\color{black}\bm{x^{\sigma{\tiny[0:n]}}} \color{black} \color{black} \color{black}})}$ cannot be evaluated. This requires reformulating the problem to handle initially unknown semantic labels. Hence, the next section introduces a higher resolution representation (product automaton), our problem defined over it, and our proposed solution.


\section{Solution Approach}\label{sec:solution_approach}


We propose an automata-theoretic approach, with offline and online parts, to \color{black} find a sequence of actions $\bm{\sigma{\scriptstyle[0:n]}}$ such that $\bm{  l({\bm{x^{\sigma{\tiny[0:n]}}} })}\in \mathscr{L}_{pref,\phi}$. 

\color{black}
\subsection{Product Automaton}

We construct a special  product automaton that can model the uncertainty associated with the labels.
\vspace{-2mm}
\begin{definition}[Product automaton] \textit{ Given a PL-DMDP $\mathcal{M} = (X, \Sigma, \delta, O, L, l_{true}, p_L, c)$ and a total DFA $\mathcal{A} = (S, s_0, 2^O, \delta_{\color{black}a\color{black}}, F_a, \{s_t\})$, the product automaton is a tuple $\mathcal{P}=\mathcal{M}\times \mathcal{A}=(S_p, S_{p_0},\Sigma, \delta_p, p_p, \mathcal{F}_a, \mathcal{F}_t)$, where:
\begin{itemize}
  \item $S_p = X \times S$ is the set of states,
  \item $S_{p_0} = X \times \{s_0\} \subseteq S_p$ is the set of initial states,
  \item  $\Sigma$ is the finite set of actions,
  \item $\delta_p \subseteq S_p \times S_p$ is a transition relation such that for any $(x, s)$ and $(x^\prime,s^\prime)\in S_p$, we have $((x, s), (x^\prime,s^\prime)) \in \delta_p$ if and only if $\exists \sigma \in \Sigma$ such that $\delta(x,\sigma) = x^\prime$ and $\exists   l \in L(x^\prime) \text{ such that } p_L(x^\prime,  l)\geq 0\text{ and } \delta_{\color{black}a\color{black}}(s,  l) = s^\prime $,
  \item $p_p: S_p \times \Sigma \times S_p \rightarrow [0,1]$ assigns a probability to each edge in the product automaton $\mathcal{P}$ based on the information of $\mathcal{M}$ such that $\forall ((x, s), (x^\prime,s^\prime)) \in \delta_p$ and $\sigma \in \Sigma$, $p_p((x, s),\sigma,  (x^\prime,s^\prime)) = \sum_{  l\in L_{s\rightarrow s^\prime}}p_L(x^\prime,  l)$ if $\delta(x,\sigma) = x^\prime$; otherwise $p_p((x, s),\sigma,  (x^\prime,s^\prime)) = 0$. Here, $L_{s\rightarrow s^\prime} = \{  l\in 2^O \mid \delta_{\color{black}a\color{black}}(s,  l) = s^\prime\}$,
  \item $\mathcal{F}_a = X \times F_a$ is the set of accepting states,
  \item $\mathcal{F}_t = X \times \{s_t\}$ is the set of trash states.
\end{itemize}
}
\label{def:P}
\end{definition}

The product automaton encodes both sets of physical states and the total DFA states of the robot. Reaching an accepting state in the product automaton guarantees the satisfaction of constraint (1d). Our goal will be to find a policy over this product automaton that has a non-zero probability of reaching an accepting state while minimizing the expected cost. To this end, we define the following reward function. 

\subsection{Reward Design And Value Iteration}
For any two product automaton states $s_p = (x,s)$ and $s_p^\prime=(x^\prime,s^\prime) \in S_p$, we define our reward function as follows:

\begin{equation}\label{eq:reward}
    \scriptstyle
    r(s_p,\sigma, s_p^\prime)
    =
    \small
    \begin{Bmatrix}
     \frac{-\beta}{1-\gamma}\color{black}& \text{      if } s_p \notin \mathcal{F}_t, s_p^\prime \in \mathcal{F}_t \text{ and } \delta(x,\sigma) = x^\prime  \\
     0 & \text{      if } s_p \in \{\mathcal{F}_a\cup \mathcal{F}_t\}   \\
     -\beta& \text{otherwise}
    \end{Bmatrix}.
\end{equation}

 The first expression indicates a reward of $ \frac{-\beta}{1-\gamma}$ when transitioning from a non-trash state to a trash state, which assigns a large penalty for violating the specification. Here, $\gamma\in [0,1)$ is a discount factor, 
 \textcolor{black}{and its value determines the importance given to future rewards when computing the policy as in \eqref{eq:return}.} The second expression indicates $0$ reward to the transitions that start from an accepting or a trash state. This implies that we disregard events occurring after the robot enters an accepting or a trash state. In the third expression, a uniform negative reward ($-\beta$) is applied to all other transitions, which penalizes the robot for not completing the mission. This reward shaping strategy encourages the robot to reach the set of accepting states in $\mathcal{F}_a$ while avoiding the trash states in $\mathcal{F}_t$ and minimizing the cost. An example of the defined reward function over a portion of the product automaton is illustrated in \textcolor{black}{Fig. \ref{fig:figure4}}.

\begin{wrapfigure}{r}{0.2\textwidth}
    \centering
    \includegraphics[width=0.2\textwidth]{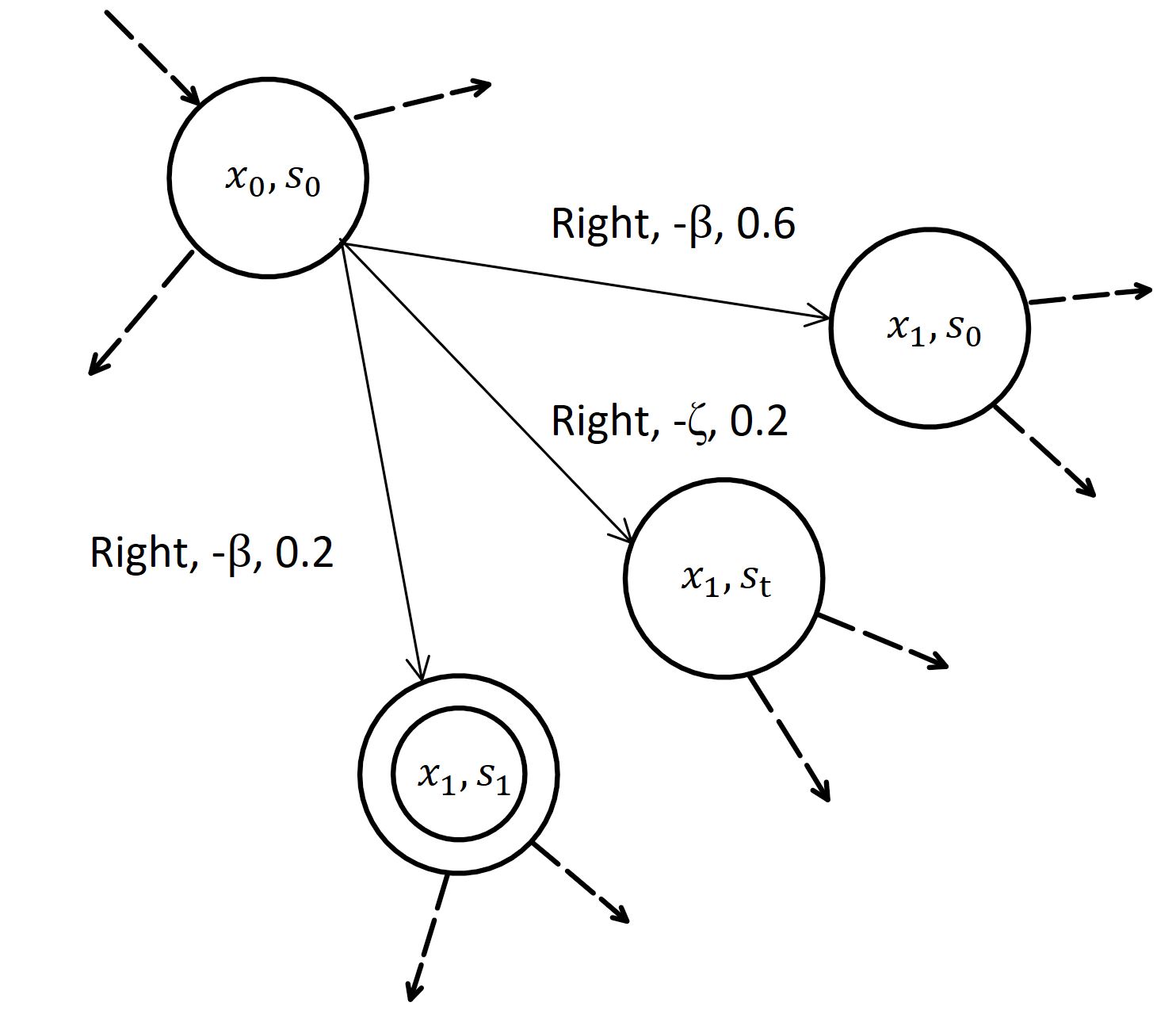}
    \caption{\scriptsize A portion of the product automaton 
    showing only the outgoing edges from $(x_0, s_0)$ under action $Right$, along with their associated probabilities and rewards.}
    \label{fig:figure4}
\end{wrapfigure}
A deterministic policy $\pi_p$ is a function that maps each product automaton state
to an action, i.e., $\pi_p: S_p \rightarrow \Sigma$. A trajectory over $\mathcal{P}$ generated by $\pi_p$ is an infinite sequence $ \bm{s_p^{\pi_p}}= s_p(0)s_p(1) \dots$, where $s_p(i)\in S_p$ for all $i\geq 0$. Due to the uncertainty associated with the labels, multiple trajectories can be generated over $\mathcal{P}$ under a policy $\pi_p$. We define the expected return of any state $s_{p}\in S_p$ under policy $\pi_p$ as follows:

\begin{equation}\label{eq:return}
\scriptsize
U^{\pi_p}(s_p) =\mathbb{E}^{\pi_p}\left[\sum_{i=0}^{\infty}\gamma^i r\big(s_{p}(i),\pi_p\big(s_p(i)\big), s_p(i+1)\big) \bigg| s_p(0)=s_p\right].
\end{equation}

\color{black} 

As the robot collects new information about the environment by revealing true labels, the value of the expected return in (\ref{eq:return}) changes accordingly. Therefore, we define the following problem, which is solved iteratively.

\color{black}
   
\color{black}\begin{problem}

 \textit{Given a product automaton $\mathcal{P}$ as in Def. \ref{def:P}, find a policy $\pi_p^*$ that maximizes the expected return $U^{\pi_p}(s_p)$ for all the states $s_p\in S_p$:}\color{black}

 \begin{equation}\label{eq:policy}
   \pi_p^*(s_p) =  \argmax_{\pi_p\in \Pi_p}[U^{\pi_p}(s_p)], \forall s_p \in S_p,
\end{equation}
\noindent where $\Pi_p$ is the set of all deterministic policies over $S_p$. 

 \vspace{-2mm}
\end{problem}\color{black}
\begin{ass} \textit{If the robot is in a state $x \in X$, it has access to \textcolor{black}{ $l_{true}(x)$ and $l_{true}(x')$ for all $x'\in N_{x}^h$.}  Here $h\geq1$ depends on the range of the robot's sensors.}
\label{sensor}
\end{ass}

Assumption \ref{sensor} is not restrictive, as most modern sensors offer high accuracy in capturing environmental details, which enables the robot to reliably determine the labels of the states. To update the belief based on sensor data, we update $p_L$ for each state $x \in N_{x_{\text{current}}}^h$ as follows:

\begin{equation}\label{eq:map_update}
    \small
    p_L(x,l)
    =
    \begin{Bmatrix}
    1& \text{      if }  l = l_{true}(  x)    \\
     0 & \text{    otherwise } 
    \end{Bmatrix}.
\end{equation}


\begin{definition}[Uncertain States]
The set of uncertain states is defined as $\mathcal{X} \subseteq X$, where $\mathcal{X}$ includes all states $x \in X$ for which the true label $l_{\text{true}}(x)$ is not known.
\end{definition}

\color{black}
\begin{definition}[Non-zero probability satisfying policy]
    \textit{A policy $\pi_p$ is a non-zero probability satisfying policy if, starting from any initial state $s_p(0) \in \mathcal{S}_p \setminus \mathcal{F}_t$, among all possible trajectories that can be generated by $\pi_p$ over $\mathcal{P}$, there exists a trajectory $\bm{s_p^{\pi_p}} = s_p(0)s_p(1) \dots$, in which there is a state $s_p(i) \in \mathcal{F}_a$ for $i \geq 0$.
}\end{definition}



\color{black}By following a non-zero probability satisfying policy, there is always a chance of satisfying the specification.

\begin{ass}
\label{non-zerp}
    At any time during the mission, there exists a non-zero probability satisfying policy over $\mathcal{P}$.
\end{ass}
Assumption \ref{non-zerp} is a mild assumption indicating the feasibility of $\phi$ over the PL-DMDP. It states that the initial belief must contain sufficient information to ensure the existence of a non-zero probability satisfying policy. According to this assumption, when $l_{\text{true}}$ is known for all states, the existence of a trajectory from any state $s_p \in S_p \setminus (\mathcal{F}_a \cup \mathcal{F}_t)$ to an accepting state is guaranteed.

\color{black}

\vspace{-1mm}

\begin{lemma}
\label{threshold}
\textit{
A policy $\pi_p$ is a non-zero probability satisfying policy if and only if it satisfies the condition $U^{\pi_p}(s_p) > \frac{-\beta}{1 - \gamma}$ for all $s_p \in \mathcal{S}_p \setminus (\mathcal{F}_a \cup \mathcal{F}_t)$.
}
\end{lemma}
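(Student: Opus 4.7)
My plan is to evaluate the discounted return along a single trajectory and then pass to the expectation. The key structural fact, inherited from the standard construction of the total DFA for a co-safe LTL formula, is that $\mathcal{F}_a$ and $\mathcal{F}_t$ are absorbing in $\mathcal{P}$: once a trajectory enters an accepting (respectively, trash) state it remains there, and by the second case of \eqref{eq:reward} all per-step rewards vanish from that point on. In particular, every trajectory visits at most one of these two sets.

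Fix $\pi_p$, a state $s_p \in \mathcal{S}_p \setminus (\mathcal{F}_a \cup \mathcal{F}_t)$, and an arbitrary trajectory $\bm{s_p^{\pi_p}}$ with $s_p(0)=s_p$. Let $T_a$ (resp.\ $T_t$) denote the first time the trajectory enters $\mathcal{F}_a$ (resp.\ $\mathcal{F}_t$), with the convention $+\infty$ if this never occurs; both are at least $1$ by the choice of $s_p$, and they cannot be equal. I will then compute the per-trajectory return in three exhaustive cases. If $T_a < T_t$, the rewards are $-\beta$ for steps $0,\dots,T_a-1$ and $0$ afterwards, giving $-\beta(1-\gamma^{T_a})/(1-\gamma)$, which is strictly greater than $-\beta/(1-\gamma)$ since $\gamma^{T_a}>0$. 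If $T_t < T_a$ (so $T_t<\infty$), each of the first $T_t-1$ steps yields $-\beta$, the transition into $\mathcal{F}_t$ at step $T_t-1$ yields $-\beta/(1-\gamma)$, and all later rewards are $0$; the telescoping identity $-\beta(1-\gamma^{T_t-1})/(1-\gamma)+\gamma^{T_t-1}(-\beta/(1-\gamma))=-\beta/(1-\gamma)$ gives a total of exactly $-\beta/(1-\gamma)$. If $T_a=T_t=+\infty$, every reward is $-\beta$ and the geometric sum again equals $-\beta/(1-\gamma)$.

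Given this per-trajectory dichotomy, $U^{\pi_p}(s_p)$ is a convex combination (under the distribution over trajectories induced by $\pi_p$ and $p_p$) of values that are either equal to $-\beta/(1-\gamma)$ or strictly above it. Hence $U^{\pi_p}(s_p) > -\beta/(1-\gamma)$ if and only if some positive-probability trajectory from $s_p$ achieves a return strictly above $-\beta/(1-\gamma)$, which by the case analysis is equivalent to some positive-probability trajectory from $s_p$ reaching $\mathcal{F}_a$. This is exactly the definition of a non-zero probability satisfying policy, so both directions of the equivalence follow at once.

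The main obstacle I anticipate is the bookkeeping in the middle case: pinpointing the step at which the $-\beta/(1-\gamma)$ penalty is accrued and verifying that all subsequent rewards vanish (so that the telescoping cancellation is clean) relies critically on $\mathcal{F}_t$ being absorbing. The absorbing property of both $\mathcal{F}_a$ and $\mathcal{F}_t$ in $\mathcal{P}$ is thus a prerequisite I will state up front; it follows from the extension-closure of good prefixes for co-safe LTL (making $F_a$ absorbing in the DFA) and the self-loop at the trash state $s_t$ added in the total DFA construction.
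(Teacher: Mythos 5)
Your proof is correct and follows essentially the same route as the paper's: a per-trajectory case analysis (reach $\mathcal{F}_a$ first, reach $\mathcal{F}_t$, or reach neither) showing the return is strictly above $\frac{-\beta}{1-\gamma}$ exactly in the first case and equal to it in the other two, followed by the observation that the expectation is a convex combination of these per-trajectory returns. Your explicit appeal to the absorbing nature of $\mathcal{F}_a$ and $\mathcal{F}_t$ (which the paper leaves implicit in its Group 3 computation) is a minor but welcome tightening, not a different argument.
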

\begin{proof}
   For any policy $\pi_p$, all possible trajectories that can be generated over the product automaton $\mathcal{P}$, starting from a state $s_p \in \mathcal{S}_p \setminus (\mathcal{F}_a \cup \mathcal{F}_t)$, can be categorized into three distinct groups:

    \textbf{Group 1:} Trajectories that never reach an accepting state or a trash state. For any such trajectory, the return is:
    \begin{equation}
        U_1 = \sum_{i=0}^{\infty} \gamma^i (-\beta) = \frac{-\beta}{1 - \gamma}.
    \end{equation}

    \textbf{Group 2:} Trajectories that reach a trash state. Suppose a trajectory reaches a trash state after $n$ time steps. Then, the return is:
    \begin{equation}
        U_2 = -\beta - \gamma\beta  - \dots - \gamma^{n-1}\beta - \gamma^n \frac{\beta}{1 - \gamma}
        = \frac{-\beta}{1 - \gamma}.
    \end{equation}

    \textbf{Group 3:} Trajectories that reach an accepting state. Suppose the trajectory reaches an accepting state after $n$ time steps. Then, the return is:
    \begin{equation}
        U_3 = -\beta - \gamma\beta - \dots - \gamma^{n-1}\beta + \gamma^n \cdot 0
        > \frac{-\beta}{1 - \gamma}.
    \end{equation}

    For a policy $\pi_p$ and any state $s_p \in \mathcal{S}_p \setminus (\mathcal{F}_a \cup \mathcal{F}_t)$, the expected return $U^{\pi_p}(s_p)$ is a weighted sum over the three groups:

    \begin{equation}
        \label{weights}
        U^{\pi_p}(s_p) = w_1 U_1 + w_2 U_2 + w_3 U_3,
    \end{equation}
    where $w_1$, $w_2$, and $w_3$ are the probabilities of generating trajectories in each group, and $w_1 + w_2 + w_3 = 1$.

    Now assume that $\pi_p$ is a non-zero probability satisfying policy. Then, by definition, there exists at least one trajectory that reaches an accepting state, which implies $w_3 > 0$. Since $U_3 > \frac{-\beta}{1 - \gamma}$, and $U_1 = U_2 = \frac{-\beta}{1 - \gamma}$, it follows from \eqref{weights} that:
    \[
    U^{\pi_p}(s_p) > \frac{-\beta}{1 - \gamma}.
    \]

    Conversely, suppose that for all $s_p \in \mathcal{S}_p \setminus (\mathcal{F}_a \cup \mathcal{F}_t)$, we have $U^{\pi_p}(s_p) > \frac{-\beta}{1 - \gamma}$. From \eqref{weights}, this inequality can only hold if $w_3 > 0$, since both $U_1$ and $U_2$ are equal to $\frac{-\beta}{1 - \gamma}$. Thus, at least one trajectory generated by $\pi_p$ must reach an accepting state, which implies that $\pi_p$ is a non-zero probability satisfying policy.
\end{proof}

\begin{corollary}
\label{corollary}
Let Assumption~\ref{non-zerp} hold. Then, the policy $\pi_p^*$ computed in~\eqref{eq:policy} satisfies $U^{\pi_p^*}(s_p) > \frac{-\beta}{1 - \gamma}$ for all $s_p \in \mathcal{S}_p \setminus (\mathcal{F}_a \cup \mathcal{F}_t)$. Therefore, $\pi_p^*$ is a non-zero probability satisfying policy.
\end{corollary}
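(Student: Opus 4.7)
The plan is to derive the corollary as a direct consequence of Assumption~\ref{non-zerp} combined with Lemma~\ref{threshold} and the optimality of $\pi_p^*$ defined by~\eqref{eq:policy}. The argument has three short steps, and the only thing to be careful about is invoking both directions of Lemma~\ref{threshold} correctly.

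First, I would use Assumption~\ref{non-zerp} to assert the existence of some (possibly non-optimal) non-zero probability satisfying policy, call it $\pi_p'$, over the current product automaton $\mathcal{P}$. Then, applying the ``only if'' direction of Lemma~\ref{threshold} to $\pi_p'$, I would conclude that $U^{\pi_p'}(s_p) > \frac{-\beta}{1-\gamma}$ for every $s_p \in \mathcal{S}_p \setminus (\mathcal{F}_a \cup \mathcal{F}_t)$.

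Second, I would use the defining property of $\pi_p^*$ in~\eqref{eq:policy}: since $\pi_p^*$ maximizes $U^{\pi_p}(s_p)$ over all deterministic policies in $\Pi_p$ state-wise, we have $U^{\pi_p^*}(s_p) \geq U^{\pi_p'}(s_p)$ for every $s_p \in S_p$. Chaining this with the previous inequality yields $U^{\pi_p^*}(s_p) > \frac{-\beta}{1-\gamma}$ for all $s_p \in \mathcal{S}_p \setminus (\mathcal{F}_a \cup \mathcal{F}_t)$, which establishes the first claim of the corollary.

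Finally, I would apply the ``if'' direction of Lemma~\ref{threshold} to $\pi_p^*$: since $U^{\pi_p^*}(s_p) > \frac{-\beta}{1-\gamma}$ on all non-accepting, non-trash states, the lemma guarantees that $\pi_p^*$ is itself a non-zero probability satisfying policy, completing the proof. The main subtlety I expect is justifying that $\argmax$ in~\eqref{eq:policy} yields a single deterministic policy which is optimal at every state simultaneously (standard for discounted MDPs with bounded rewards, which is the case here since $r$ takes finitely many values); if the authors want to be careful, I would note that this state-wise optimality is precisely what value iteration converges to, so the inequality $U^{\pi_p^*}(s_p) \geq U^{\pi_p'}(s_p)$ holds pointwise rather than only in some aggregate sense.
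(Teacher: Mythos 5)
Your proof is correct and is essentially the argument the paper intends: the corollary is stated without an explicit proof precisely because it follows from Assumption~2 (existence of some non-zero probability satisfying policy), the forward direction of Lemma~\ref{threshold} applied to that policy, the state-wise optimality of $\pi_p^*$ from~\eqref{eq:policy}, and the reverse direction of Lemma~\ref{threshold}. Your closing remark about the existence of a single deterministic policy that is simultaneously optimal at all states is the right technical caveat, and it holds here by standard discounted-MDP theory.
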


\color{black}

 \vspace{-1mm}

\begin{lemma}
\label{known}
\textit{Let $\mathcal{P}=(S_p, S_{p_0}, \Sigma, \delta_p, p_p, \mathcal{F}_a, \mathcal{F}_t)$ be the product of PL-DMDP $\mathcal{M} = (X, \Sigma, \delta, O, L, l_{true}, p_L, c)$ with known $l_{true}$ \textcolor{black}{ (i.e., set of uncertain states $\mathcal{X}=\emptyset$)} and a total DFA $\mathcal{A}$ of the desired scLTL $\phi$. If Assumption \ref{non-zerp} holds, then the policy in  (\ref{eq:policy}) which maximizes the return in (\ref{eq:return}) will lead the robot to an accepting state in $\mathcal{F}_a$.}
\end{lemma}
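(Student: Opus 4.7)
The key observation I would use is that when $l_{true}$ is known at every state (i.e., $\mathcal{X} = \emptyset$), the probabilistic labeling collapses to a deterministic one: by the update rule \eqref{eq:map_update}, $p_L(x, l_{true}(x)) = 1$ and $p_L(x,l) = 0$ for any other label. Carrying this through the product-automaton definition (Def. \ref{def:P}), the edge probability $p_p((x,s), \sigma, (x',s'))$ becomes $1$ for exactly one successor $(x',s')$ when $\delta(x,\sigma) = x'$, and $0$ otherwise. Hence $\mathcal{P}$ effectively reduces to a deterministic transition system, and any deterministic policy $\pi_p$ induces a \emph{unique} trajectory from any initial state (rather than a distribution over trajectories).

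The plan is then to invoke Corollary~\ref{corollary}: under Assumption~\ref{non-zerp}, the optimal policy $\pi_p^*$ from \eqref{eq:policy} is a non-zero probability satisfying policy, meaning that from any $s_p \in \mathcal{S}_p \setminus \mathcal{F}_t$, there exists some generated trajectory reaching an accepting state in $\mathcal{F}_a$. Combining this with the determinism established above, the unique trajectory generated by $\pi_p^*$ must be that accepting trajectory. Therefore, executing $\pi_p^*$ starting from any $s_p \in \mathcal{S}_p \setminus \mathcal{F}_t$ (in particular, from the initial product state) leads the robot to $\mathcal{F}_a$ in finitely many steps.

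The main subtlety — and what I would address carefully — is reconciling Assumption~\ref{non-zerp} with the current setting: the assumption guarantees only the \emph{existence} of a non-zero probability satisfying policy, while we need this property to hold for $\pi_p^*$ itself. This is precisely what Corollary~\ref{corollary} delivers via Lemma~\ref{threshold}: the assumption implies $\max_{\pi_p} U^{\pi_p}(s_p) > \tfrac{-\beta}{1-\gamma}$, and hence $\pi_p^*$ inherits the non-zero probability satisfying property. Once this inheritance is articulated, the determinism argument finishes the proof with essentially no further calculation; the only remaining bookkeeping is to note that the number of steps to reach $\mathcal{F}_a$ is finite because trajectories belonging to Group 3 in the proof of Lemma~\ref{threshold} reach an accepting state in finite time by construction.
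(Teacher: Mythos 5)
Your proposal is correct and follows essentially the same route as the paper's own proof: both arguments combine the determinism of $\mathcal{P}$ when $l_{true}$ is known (so $\pi_p^*$ generates a unique trajectory) with Corollary~\ref{corollary} (so $\pi_p^*$ is a non-zero probability satisfying policy), concluding that the unique trajectory must reach $\mathcal{F}_a$. The paper phrases this as a contradiction while you argue directly, and you add slightly more detail on why $p_p$ collapses to a deterministic transition relation, but the substance is identical.
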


\begin{proof}
Let us assume that the policy $\pi_p^*$ in~\eqref{eq:policy}, that maximizes the return in~\eqref{eq:return}, does not lead the robot to an accepting state. Since $l_{\text{true}}$ is known, the transitions in $\mathcal{P}$ are deterministic, and therefore $\pi_p^*$ generates a unique trajectory that does not reach any accepting state in $\mathcal{F}_a$.
This contradicts Corollary~\ref{corollary}, which states that if Assumption \ref{non-zerp} holds, $\pi_p^*$ is a non-zero probability satisfying policy, which implies the existence of at least one trajectory that reaches an accepting state. Therefore, the assumption is contradicted, and $\pi_p^*$ must lead the robot to an accepting state in $\mathcal{F}_a$.
\end{proof}

\color{black}
The policy $\pi_p^*$ can be computed by the value iteration algorithm \cite{bellman1957markovian} using the reward function in (\ref{eq:reward}). This algorithm has a computational time complexity of $\mathcal{O}(|S_p|^2 \times |\Sigma|)$. Note that lines 4-9 of Alg. 1 represent the standard iterative Bellman equation over the product automaton states. In line 8, $r$ and $p_p$ are the reward function and the probability of the edges of the product automaton, respectively. Finally, lines 10-12 find the optimal policy $\pi_p^*$ as in (\ref{eq:policy}).

\color{black}
\vspace{-1mm}
\floatname{algorithm}{Alg.}
\begin{mdframed}
\vspace*{-5mm}
\begin{algorithm}[H] 
\scriptsize
\caption{ Value iteration over product automaton}\label{Algo}
\setcounter{AlgoLine}{0}
\SetKwInOut{Input}{Input}
\SetKwInOut{Output}{Output}
\Input{product automaton $\mathcal{P}$, reward function $r$, discount factor $\gamma$, convergence threshold $\epsilon_0$}
\Output{Optimal policy $\pi_p^*$}

\For{\text{All} $s_p \in S_p$}{
      $v(s_p) = 0$\;
    }
    
    $\epsilon = \infty$
    
    \While{$\epsilon > \epsilon_0$}{

       $\epsilon = 0$
       
      \For{\text{each} $s_p \in S_p$}{

        \hspace{-2mm}$\text{old-value} = v(s_p)$
        
        \scriptsize$\hspace{-2mm}v(s_p) = \hspace{-.5mm}\max\limits_{\substack{\sigma}}\hspace{-2mm} \sum\limits_{\substack{{s_p^\prime \in S_p}}}\hspace{-1mm}p_{p}(s_p,\sigma,s_p^\prime)(r(s_p,\sigma, s_p^\prime)\hspace{-0.5mm}+\hspace{-1mm}\gamma v(s_p^\prime))$\;
        
        \hspace{-2mm}$\epsilon = \max(\epsilon, |v(s_p) - \text{old-value}|)$
      }
      
    }\For{\text{each} $s_p \in S_p$}{
        $\hspace{-2mm}v^*(s_p) = v(s_p)$\;
        
        \scriptsize$\hspace{-2mm}\pi_p^*(s_p)\hspace{-0.25mm} = \argmax\limits_{\substack{\sigma}}\hspace{-2mm}\sum\limits_{\substack{{s_p^\prime \in S_p}}}\hspace{-1mm}p_{p}(s_p,\sigma,s_p^\prime)(r(s_p,\sigma,s_p^\prime)\hspace{-0.5mm}+\hspace{-0.5mm}\gamma v^*(s_p^\prime))$\;
      }

\end{algorithm}
\vspace{-5mm}
\end{mdframed}

\vspace{-1mm}
To integrate new information gathered by the robot during the mission, Alg. 1 should be executed iteratively, for example, after each action. However, updating the policy at every step can be computationally expensive and often unnecessary. To determine when an update is necessary, we first need to define the information matrix \textcolor{black}{$I(N_{x_i}^h, \mathcal{M}) \in \mathbb{R}^{m \times |2^O|}$, where $\mathcal{M}$ is the PL-DMDP, $N_{x_i}^h \subseteq X$ is the set of $h$-hop neighbor states of the current state $x_i \in X$, $m$ \textcolor{black}{is} the cardinality of $N_{x_i}^h$, and each element is defined as follows:}  

\begin{equation}
\scriptsize
I(N_{x_i}^h, \mathcal{M}) = \begin{pmatrix}
I_{11} & I_{12} & \cdots & I_{1|2^O|} \\
I_{21} & I_{22} & \cdots & I_{2|2^O|} \\
\vdots & \vdots & \ddots & \vdots \\
I_{m1} & I_{m2} & \cdots & I_{m|2^O|}
\end{pmatrix},
\end{equation}
\noindent where $I_{jk}$ represents the difference between the true probability of $ l_k$ in state $x_j$ and the prior belief as per $\mathcal{M}$, i.e.,
\begin{equation}\label{eq:information_matrix}
    \small
    I_{jk}
    =
    \begin{Bmatrix}
    |1- p_L(  x_j,  l_k)|& \text{      if }  l_{true}(  x_j) =   l_k   \\
     |0 - p_L(  x_j,  l_k)|& \text{    otherwise } 
    \end{Bmatrix}.
\end{equation}

Note that $p_L$ here refers to its value at the step prior to entering the current state $x_i$.
\color{black}
\vspace{-1mm}
\begin{definition}[Update-trigger function]
    \textit{ The update-trigger function calculates the infinity norm of the information matrix $I(N_{x_i}^h,\mathcal{M})$, which is $\|I(N_{x_i}^h,\mathcal{M})\|_{\infty}$.} 
\label{update_trigger}
\end{definition}

According to \eqref{eq:information_matrix}, a non-zero value of $\|I(N_{x_i}^h,\mathcal{M})\|_{\infty}$ indicates a difference between the robot's previous belief and the true environment, which necessitates an update.

\subsection{\color{black} Main Algorithm\color{black}}  
The main algorithm (Alg. 2) begins by constructing the total DFA $\mathcal{A}$ and the product automaton  $\mathcal{P}$ (lines 1 and 2). Lines 3 to 5 set the initial values of the states. In line 4, $\mathcal{X}$ is the set of uncertain states (i.e., their $l_{true}$ have not been revealed). 
The main loop starts at line 6. Line 7 checks if an update is necessary and ensures the execution of value iteration to find an initial policy at $t=0$. Line 8 updates edge probabilities in $\mathcal{P}$ to incorporate any new information about $l_{true}$ into the policy. Note that the map is updated at each step of the mission (\textcolor{black}{line 10}). The $\textbf{Update-Map}$ is a function that takes the $h$-hop neighbors of the current state $x_{\text{current}}$ and $\mathcal{M}$ as inputs. The output of this function is the PL-DMDP with an updated $p_L$ function, in which the true labels of states within $N_{x_{\text{current}}}^h$ are incorporated as in~\eqref{eq:map_update}. \color{black} \textcolor{black}{Lines 11 to 17} involve executing $\pi_p^*$, updating the current states and the set of uncertain states. Overall, the main algorithm returns a sequence of actions that satisfies the given specification in the PL-DMDP\textcolor{black}{, as shown in Fig. \ref{fig:figure5}.}
\vspace{-1mm}

\begin{mdframed}
\vspace*{-5mm}
\begin{algorithm}[H]
\scriptsize
\caption{Main Algorithm \color{black}
(Online)\color{black}}\label{Algo}
\setcounter{AlgoLine}{0}
\SetKwInOut{Input}{Input}
\SetKwInOut{Output}{Output}
\Input{$\mathcal{M} = (X, \Sigma, \delta, O, L,l_{true}, p_L, c)$, $\phi$, Initial\_state ($x_0$)}
\Output{Sequence of actions $\color{black} \bm{\sigma{\scriptstyle[0:n]}} \color{black}$}
\text{From $\phi$ construct $\mathcal{A} = (S, s_0, 2^O, \delta_{\color{black}a\color{black}}, F_a, \{s_t\})$}\;

\text{From $\mathcal{M}$ and $\mathcal{A}$ construct $\mathcal{P}=(S_p, S_{p_0}, \Sigma, \delta_p, p_p, \mathcal{F}_a, \mathcal{F}_t)$ }\;

$\color{black}\bm{\sigma{\scriptstyle[0:n]}}\color{black} = [\ ] $, $x_\text{current} = x_0$, $s_\text{current}= s_0, {s_{p}}_\text{current} =(x_0,s_0)$ \;

$\mathcal{X} = X \setminus N_{x_\text{current}}^h$

$t = 0$

\While{${s_p}_\text{current} \notin \mathcal{F}_a$}{
  
  \If{$\|I(N_{x_\text{current}}^h,\mathcal{M})\|_{\infty} > 0$ \text{or} $t \hspace{-1mm}=\hspace{-0.5mm} 0$}{

  \text{Update the product automaton based on the most recent $p_L$ }

  \text{Calculate} $\pi_p^*$ \text{ from Alg. 1} \color{black} (Problem 1) \color{black}
  \color{black}

    }

    $\mathcal{M}= \textbf{Update-map}(N_{x_\text{current}}^h,\mathcal{M})$

      \text{Execute $\pi_p^*({s_p}_\text{current})$}

          $ \bm{\sigma} [t] = \pi_p^*({s_p}_\text{current})$

      \color{black}$t = t+1$\color{black}

      $x_\text{current} = \delta(x_\text{current},\pi_p^*({s_p}_\text{current}))$\;
      
      $s_\text{current}= \delta_{\color{black}a\color{black}}(s_\text{current}, l_{true}(x_\text{current}))$
      
      ${s_p}_\text{current} = (x_\text{current}, s_\text{current})$

      $\mathcal{X} = \mathcal{X}\setminus N_{x_\text{current}}^h$

     }
    
    \text{return} $\color{black} \bm{\sigma{\scriptstyle[0:n]}} \color{black}$

\end{algorithm}
\vspace{-5mm}
\end{mdframed}


\begin{lemma}
\label{safety}
\textit{Let Assumptions 1 and 2 hold, and let $x_{current} \in X$ denote the current state. Then, the optimal policy $\pi_p^*$ in  (\ref{eq:policy}) never leads the robot to enter a trash state.}
\end{lemma}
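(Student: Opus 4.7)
The plan is to argue by contradiction, leveraging Corollary~\ref{corollary}. Suppose that executing $\pi_p^*$ eventually drives the robot into $\mathcal{F}_t$; then there is a product state $s_p = (x, s) \in S_p \setminus (\mathcal{F}_a \cup \mathcal{F}_t)$ on the induced trajectory from which the action $\sigma = \pi_p^*(s_p)$ causes a one-step transition into the trash set. By Corollary~\ref{corollary} (which uses Assumption~\ref{non-zerp}), we immediately have $U^{\pi_p^*}(s_p) > \tfrac{-\beta}{1-\gamma}$, and my goal is to derive a matching upper bound of $\tfrac{-\beta}{1-\gamma}$ for the same quantity, forcing a contradiction.

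To do this I would invoke Assumption~\ref{sensor} with $h \geq 1$ together with the update mechanism of Alg.~2. Since $x' := \delta(x, \sigma)$ is a $1$-hop neighbor of $x$, the true label $l_{true}(x')$ is revealed to the robot by the time $s_p$ is reached. If this revelation postdated the most recent call to value iteration, then the update-trigger in line~7 of Alg.~2 (the infinity norm of the information matrix) would have fired and $\pi_p^*$ would have been recomputed; hence at the most recent policy computation the belief already satisfies $p_L(x', l_{true}(x'))=1$ and $p_L(x', l)=0$ for all $l \neq l_{true}(x')$. By the formula for $p_p$ in Def.~\ref{def:P}, the outgoing edge $((x,s), \sigma, (x', \delta_a(s, l_{true}(x'))))$ therefore carries probability exactly one, and by our assumption $\delta_a(s, l_{true}(x')) = s_t$.

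The contradiction then follows by a direct computation of the return. The one-step reward is $\tfrac{-\beta}{1-\gamma}$ by the first branch of \eqref{eq:reward}. Thereafter the trajectory remains in $\mathcal{F}_t$ (the trash state is absorbing in the total DFA), and the second branch of \eqref{eq:reward} zeros out every transition out of a trash state, so
\[
U^{\pi_p^*}(s_p) \;=\; \tfrac{-\beta}{1-\gamma} + \sum_{i=1}^{\infty} \gamma^{i} \cdot 0 \;=\; \tfrac{-\beta}{1-\gamma},
\]
which directly contradicts the strict inequality from Corollary~\ref{corollary}.

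The main obstacle I anticipate is the bookkeeping in the second paragraph: carefully justifying that between consecutive value-iteration calls, the $1$-hop successor that the robot actually visits next always has a label that $p_L$ already assigns deterministically, so that the policy under which the robot acts is defined on a product automaton whose outgoing edge from $s_p$ faithfully reflects the true one-step transition. Once this is in place, the algebraic contradiction is immediate, and the conclusion that $\pi_p^*$ never leads the robot into $\mathcal{F}_t$ follows.
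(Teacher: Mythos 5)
Your proof is correct and follows essentially the same route as the paper's: assume $\pi_p^*$ takes a one-step transition into $\mathcal{F}_t$, use Assumption~\ref{sensor} to conclude that this edge has probability one in the product automaton used to compute the policy, evaluate $U^{\pi_p^*}(s_p) = \tfrac{-\beta}{1-\gamma}$ from \eqref{eq:reward}, and contradict Corollary~\ref{corollary}. Your second paragraph actually supplies more of the belief-update/timing bookkeeping than the paper's own proof, which simply asserts that $p_p(s_p,\sigma,s_p')=1$ ``is available to the robot due to Assumption~\ref{sensor}.''
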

\begin{proof} 

Assumption \ref{sensor} implies that the robot is aware of the true labels $l_{true}$ of its neighbors within $N_{x_\text{current}}^h$ for some $h \geq 1$. 
Consider a non-accepting and non-trash state $s_p\in S_p\setminus {(\mathcal{F}_a\cup \mathcal{F}_t)}$ where there exists an action $\sigma \in \Sigma$ that results in a transition from $s_p$ to a trash state $s_p' \in \mathcal{F}_t$, i.e., $p_p(s_p,\sigma,s_p') = 1$ which is available to the robot due to Assumption \ref{sensor}. Assume that the optimal policy $\pi_p^*$ \color{black} selects action $\sigma$ in state $s_p$, i.e., $\pi_p^*(s_p) = \sigma$, which results in a transition to $s_p' \in \mathcal{F}_t$. According to the reward definition in~\eqref{eq:reward}, the expected return from $s_p$ under $\pi_p^*$ is:
$
U^{\pi_p^*}(s_p) =
\frac{-\beta}{1 - \gamma}.
$
However, Corollary~\ref{corollary} guarantees that for any $s_p \in \mathcal{S}_p \setminus (\mathcal{F}_a \cup \mathcal{F}_t)$, it holds that $U^{\pi_p^*}(s_p) > \frac{-\beta}{1 - \gamma}$. This contradicts the outcome $U^{\pi_p^*}(s_p) = \frac{-\beta}{1 - \gamma}$.
Therefore, $\pi_p^*$ cannot lead the robot into a trash state.
\end{proof}

\color{black}

\begin{figure}[!t]
  \centering
  \includegraphics[width=0.48\textwidth,height=0.25\textwidth]{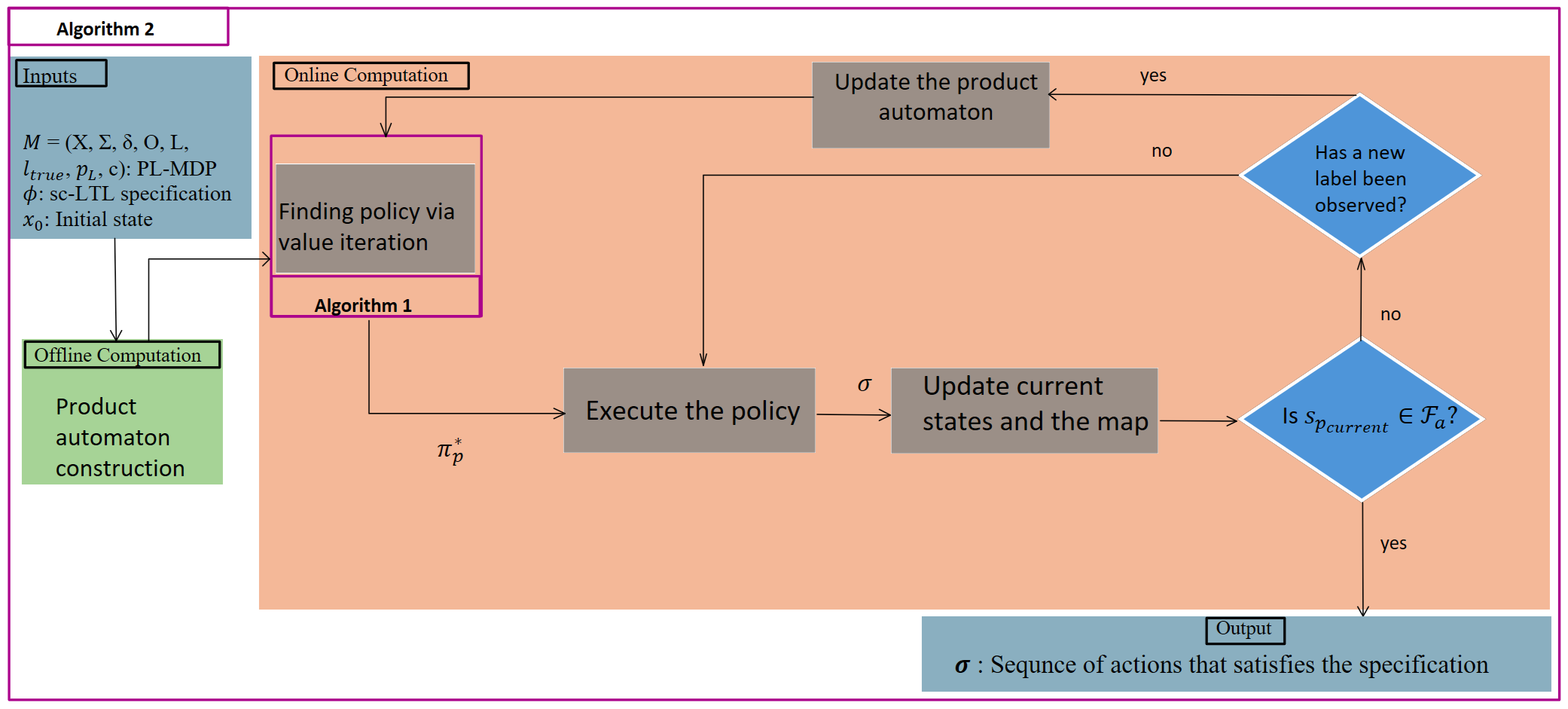}
  \caption{\scriptsize Outline of the proposed framework.}
  \label{fig:figure5}
\end{figure}

\begin{lemma}
\label{lemfortheorem}
\textit{Let Assumptions 1 and 2 hold. Suppose that there exist some uncertain states in the environment, i.e., $\mathcal{X} \neq \emptyset$. Following the policy $\pi_p^*$ results in at least one of the following:}
    \begin{itemize}
        \item the size of the set of uncertain states $\mathcal{X}$ decreases;
        \item the robot reaches an accepting state in the set $\mathcal{F}_a$.
    \end{itemize} 
\end{lemma}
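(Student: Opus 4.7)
The plan is to argue by contradiction, leveraging Lemma \ref{safety} (the policy never enters a trash state) together with Corollary \ref{corollary} (the optimal expected return strictly exceeds $-\beta/(1-\gamma)$ on non-terminal states). I would suppose that the robot follows $\pi_p^*$ indefinitely without ever reducing $|\mathcal{X}|$ and without ever visiting $\mathcal{F}_a$, and then derive a realized discounted return of $-\beta/(1-\gamma)$, contradicting Corollary \ref{corollary}.

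First I would exploit the frozen-uncertainty hypothesis to show that the robot's effective dynamics become deterministic. If $\mathcal{X}$ never shrinks along the trajectory, then at every step the update-trigger function $\|I(N_{x_\text{current}}^h,\mathcal{M})\|_\infty$ evaluates to zero, so by line 7 of Alg. 2 the policy $\pi_p^*$ is never recomputed. Moreover, every visited state $x_\text{current}$ must satisfy $N_{x_\text{current}}^h \cap \mathcal{X} = \emptyset$, because otherwise Assumption \ref{sensor} combined with the belief update \eqref{eq:map_update} would reveal a new label and shrink $\mathcal{X}$. Consequently, on the support of the realized trajectory, the $p_L$ entries are all $0$ or $1$, so every edge traversed by $\pi_p^*$ in $\mathcal{P}$ has edge probability $p_p=1$ to a unique successor. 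This collapses the expected return $U^{\pi_p^*}$ into the actual discounted return of the single realized trajectory.

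Next I would apply a pigeonhole argument on the finite state space $S_p$. The trajectory avoids $\mathcal{F}_a$ by the contradiction hypothesis, avoids $\mathcal{F}_t$ by Lemma \ref{safety}, and is deterministic under the fixed policy by the previous step, so it must eventually enter a cycle inside $S_p \setminus (\mathcal{F}_a \cup \mathcal{F}_t)$. Every transition in this cycle collects reward $-\beta$ by \eqref{eq:reward}, yielding realized return $\sum_{i=0}^{\infty} \gamma^i(-\beta) = -\beta/(1-\gamma)$. Since the expected and realized returns coincide for the starting state $s_p \in S_p \setminus (\mathcal{F}_a \cup \mathcal{F}_t)$, this contradicts Corollary \ref{corollary} and completes the argument.

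The main obstacle I anticipate is the step equating the belief-based expected return with the pathwise realized return: it requires carefully arguing that once the observed region stops growing, every edge probability $p_p$ along the executed path becomes degenerate, so that averaging over the belief reduces to evaluating a single deterministic trajectory. Everything else follows from pigeonhole on the finite product automaton and a direct application of the reward definition \eqref{eq:reward}.
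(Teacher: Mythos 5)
Your proof is correct and follows essentially the same route as the paper's: both arguments rest on the observation that if $\mathcal{X}$ never shrinks then no visited state's $h$-hop neighborhood contains an uncertain state, so the realized trajectory is the unique trajectory of the belief-MDP, and then Corollary~\ref{corollary} (together with Lemma~\ref{safety}) forces that trajectory into $\mathcal{F}_a$. The only difference is stylistic — you argue by contradiction through the quantitative threshold $U^{\pi_p^*}(s_p) > \frac{-\beta}{1-\gamma}$, while the paper invokes the equivalent qualitative fact that a non-zero probability satisfying policy with a unique trajectory must have that trajectory reach an accepting state; your pigeonhole/cycle step is unnecessary but harmless, since avoiding $\mathcal{F}_a \cup \mathcal{F}_t$ forever already yields per-step reward $-\beta$.
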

\begin{proof}
Since Assumptions \ref{sensor} and \ref{non-zerp} hold, Corollary~\ref{corollary} implies that $\pi_p^*$ is a non-zero probability satisfying policy. This means that among all the possible trajectories that $\pi_p^*$ can generate, there exists at least one trajectory that reaches an accepting state. \color{black}Additionally, Lemma~\ref{safety} indicates that the robot cannot end up in a trash state by following $\pi_p^*$. Now, let $ \bm{s_p^{\pi_p^*}}= s_p(0)s_p(1) \dots$ be the product automaton trajectory generated under $\pi_p^*$ where each $s_p(i)\in S_p$  for all $i\geq 0$. If for any state $s_p(i) = (x_i,s_i)$ we have $x_i\notin \mathcal{X}$, then $l_{true}(x_i)$ is known for all $i\geq 0$. This implies that $\pi_p^*$ can generate only one possible trajectory $\bm{s_p^{\pi_p^*}}$ over the product automaton. Thus, there must exist a state $s_p(j)$ in $\bm{s_p^{\pi_p^*}}$, where $j\geq 0$, such that $s_p(j)\in \mathcal{F}_a$. On the other hand, if there exists a state $s_p(i) = (x_i,s_i)$ in $\bm{s_p^{\pi_p^*}}$ where $x_i\in \mathcal{X}$, then the robot eventually reveals $l_{true}(x_i)$ by following $\pi_p^*$. This means the size of $\mathcal{X}$ decreases. 
As a result, at least one of the outcomes described in Lemma \ref{lemfortheorem} is attained.\end{proof}

\vspace{-2mm}
\color{black}
\begin{theorem}
\textit{If Assumptions 1 and 2 hold, then Alg. \ref{Algo} will find a sequence of actions that satisfies the scLTL task $\phi$.}\end{theorem}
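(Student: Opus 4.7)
The plan is to prove Theorem 1 by combining the previous lemmas into a termination-and-correctness argument. The safety side is already handled by Lemma \ref{safety}, so the remaining burden is to show that Alg.~\ref{Algo} eventually reaches an accepting state in $\mathcal{F}_a$, at which point the while-loop on line 6 exits and the returned action sequence produces a trajectory whose word lies in $\mathscr{L}_{pref,\phi}$ (since the run of $\mathcal{A}$ on that word ends in $F_a$).

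First, I would observe that the state space $X$ of the PL-DMDP is finite, hence the set of uncertain states $\mathcal{X}\subseteq X$ is finite and its size $|\mathcal{X}|$ is a well-defined nonnegative integer that can only decrease as the mission progresses (line 17 of Alg.~\ref{Algo} removes newly sensed $h$-hop neighbors from $\mathcal{X}$, and Assumption~\ref{sensor} guarantees the true labels of those neighbors are revealed). Combined with Lemma~\ref{safety}, which rules out entering $\mathcal{F}_t$, this gives me a natural progress measure for an induction on $|\mathcal{X}|$.

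Next I would invoke Lemma~\ref{lemfortheorem}: at every iteration in which $\mathcal{X}\neq\emptyset$, following $\pi_p^*$ either strictly decreases $|\mathcal{X}|$ or already reaches a state in $\mathcal{F}_a$. In the first case, the update-trigger of Def.~\ref{update_trigger} fires as soon as the robot enters a state whose true label differs from its previous belief (line 7), which causes the product automaton probabilities $p_p$ to be refreshed on line 8 and a new $\pi_p^*$ to be recomputed on line 9; crucially, Assumption~\ref{non-zerp} is preserved under this update because revealing a true label can only replace a convex combination of possible trajectories by one of its constituents, and Corollary~\ref{corollary} continues to apply to the refreshed product automaton. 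Thus $\pi_p^*$ remains a non-zero probability satisfying policy throughout the execution.

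Finally, since $|\mathcal{X}|$ is bounded below by $0$ and strictly decreases whenever an accepting state is not yet reached, after finitely many iterations the algorithm enters a regime where either $\mathcal{F}_a$ has already been reached or $\mathcal{X}=\emptyset$; in the latter case, Lemma~\ref{known} applies to the now fully-known PL-DMDP and guarantees that continuing to follow $\pi_p^*$ drives the robot into $\mathcal{F}_a$ in finitely many steps. In both cases, the loop terminates with $s_{p_\text{current}}\in \mathcal{F}_a$, and the returned sequence $\bm{\sigma{\scriptstyle[0:n]}}$ satisfies $\phi$. The main obstacle I anticipate is making the last step fully rigorous: I would need to argue carefully that between two successive decreases of $|\mathcal{X}|$ only finitely many actions are taken, which follows because, between two re-plans, $\pi_p^*$ is executed on a fixed product automaton in which the current reachable portion is finite and Lemma~\ref{known}/\ref{lemfortheorem} forbids infinite loops outside $\mathcal{F}_a\cup\mathcal{F}_t$. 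Spelling this inner-loop finiteness out cleanly (and noting that uniform cost $\beta>0$ together with the strict inequality in Corollary~\ref{corollary} prevents livelock) is where I expect most of the writing effort to go.
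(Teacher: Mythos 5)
Your proposal is correct and follows essentially the same route as the paper's own proof: a case split on whether $\mathcal{X}$ is empty, Lemma~\ref{lemfortheorem} as the progress guarantee (either $|\mathcal{X}|$ strictly decreases or $\mathcal{F}_a$ is reached), finiteness of $X$ to force eventual emptiness of $\mathcal{X}$, Lemma~\ref{known} for the fully-known endgame, and the good-prefix property of $\mathcal{F}_a$ to conclude satisfaction of $\phi$. Your worry about Assumption~2 surviving belief updates is already discharged by the paper's phrasing of the assumption (``at any time during the mission''), so no extra argument is needed there.
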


\begin{proof}
 In Alg. \ref{Algo}, the robot follows the policy $\pi_p^*$. Since Assumptions~\ref{sensor} and~\ref{non-zerp} hold, Lemma~\ref{threshold} indicates that $\pi_p^*$ is a non-zero probability satisfying policy. \color{black}
 Here, there are two possibilities to consider: $\mathcal{X}= \emptyset$ or $\mathcal{X}\neq \emptyset$. 
 \vspace{1mm}
 \paragraph{$\mathcal{X} = \emptyset$} $l_{true}(x)$ for all $x\in X$ is known, which results in no discrepancies between the robot's belief and the true environment. So $\|I(N_{x_\text{current}}^h,\mathcal{M})\|_{\infty}$ is always zero, i.e, line 7 is not triggered and $\pi_p^*$ does not change. Consequently, $\pi_p^*$ will lead the robot to an accepting state (Lemma \ref{known}).
 \vspace{1mm}
 \paragraph{$\mathcal{X} \neq \emptyset$}
 If $|I(N_{x_\text{current}}^h, \mathcal{M})|_{\infty}$ is always zero during the execution of $\pi_p^*$, then the policy $\pi_p^*$ remains unchanged. Since Assumptions 1 and 2 hold, Lemma \ref{lemfortheorem} indicates that following $\pi_p^*$ results in either a decrease in the size of $\mathcal{X}$ or reaching an accepting state. However, if  $\|I(N_{x_\text{current}}^h,\mathcal{M})\|_{\infty} \neq 0$, Def. \ref{update_trigger} indicates that there is a difference between the robot's belief and the true environment. This suggests that the robot has revealed the true labels of some states $x \in N_{x_{\text{current}}}^h$ for which $l_{\text{true}}(x)$ was previously unknown. These states are then removed from $\mathcal{X}$ \textcolor{black}{(line 17)}. 
 
Overall, the robot either enters an accepting state or the size of $\mathcal{X}$ decreases.
Since the PL-DMDP $\mathcal{M}$ consists of a finite number of states, the set of uncertain states will eventually become empty (i.e., $\mathcal{X} = \emptyset$). Accordingly, when $\mathcal{X} = \emptyset$,  Lemma \ref{known} indicates that Alg. \ref{Algo} which computes the policy according to \eqref{eq:policy} leads the robot to an accepting state in $\mathcal{F}_a$. According to Def. \ref{DFA}, reaching a state in $\mathcal{F}_a$ implies the generation of a “good prefix". Therefore, if $\color{black} \color{black} \color{black} \color{black} \bm{\sigma{\scriptstyle[0:n]}} \color{black} \color{black} \color{black} \color{black}$ is the output of Alg. \ref{Algo} and $\color{black}\color{black}\color{black}\bm{x^{\sigma{\tiny[0:n]}}} \color{black} \color{black} \color{black}$ is the trajectory produced by it, then we have $\bm{  l({\color{black}\color{black}\color{black}\bm{x^{\sigma{\tiny[0:n]}}} \color{black} \color{black} \color{black}})}\in \mathscr{L}_{pref,\phi}$. This means the task is satisfied.
 \end{proof}
\color{black}
\color{black}

\vspace{-3mm}
\section{Case Studies}\label{sec:case_studies}

\noindent \textbf{Simulation:} \textcolor{black}{ In the following case study, a robot operates over a $5 \times 5$ grid (Fig. \ref{fig:case1}, with each cell being $2\times 2$ meters)} using an action set $\Sigma = \{Up, Right, Down, Left, Stay\}$. These actions allow the robot to move to the corresponding feasible adjacent state in the four cardinal directions or remain in its current position. 
In our simulations, we also consider that the robot's sensor range $h=1$ and the discount factor $\gamma = 0.99$. \color{black} The computations are carried out on a laptop with an Intel Core i7 processor (2.3 GHz) and 16 GB of RAM.\color{black}

 \color{black}
 \subsubsection{Ordering Reachability with Avoidance}

  In this scenario, the robot must reach the desired regions $A$, $B$, and $C$ while avoiding region $D$ throughout the mission. Visiting $A$ can happen at any time, but $B$ must be visited before $C$. This task is expressed by the scLTL formula $\phi_1 = (\neg C\hspace{3pt} \mathcal{U}\hspace{3pt} B)\land (\lozenge C) \land (\lozenge A) \land (\neg D \hspace{3pt}\mathcal{U} \hspace{3pt}A) \land (\neg D \hspace{3pt}\mathcal{U}\hspace{3pt} C)$. The robot is not aware of the exact locations for $A,B,C  \text{ and } D$ but has access to some information regarding potential locations, as shown in Fig. \ref{fig:case1}. As observed, the robot moves through the environment and follows its policy while updating its map knowledge. Ultimately, the robot successfully finds the real $A,B,C  \text{ and } D$ locations and completes the mission. This scenario is illustrated in Fig. \ref{fig:case1}, where yellow cells represent potential A locations, blue cells indicate potential B locations, green cells denote potential C locations, and red cells mark possible D locations.
  

\begin{figure}[!htbp]
    \centering
    \begin{tabular}{ccc} 
        \begin{subfigure}[t]{0.25\columnwidth}
            \centering
            \includegraphics[width=\textwidth]{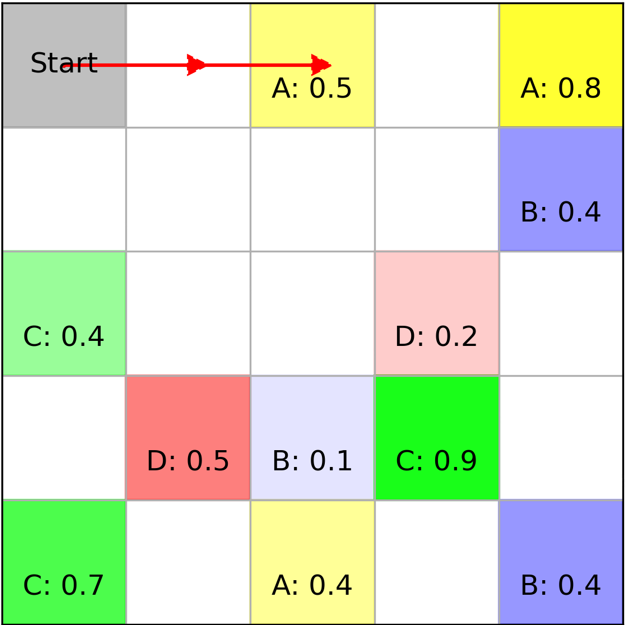}
            \caption{}
        \end{subfigure} &
        \begin{subfigure}[t]{0.25\columnwidth}
            \centering
            \includegraphics[width=\textwidth]{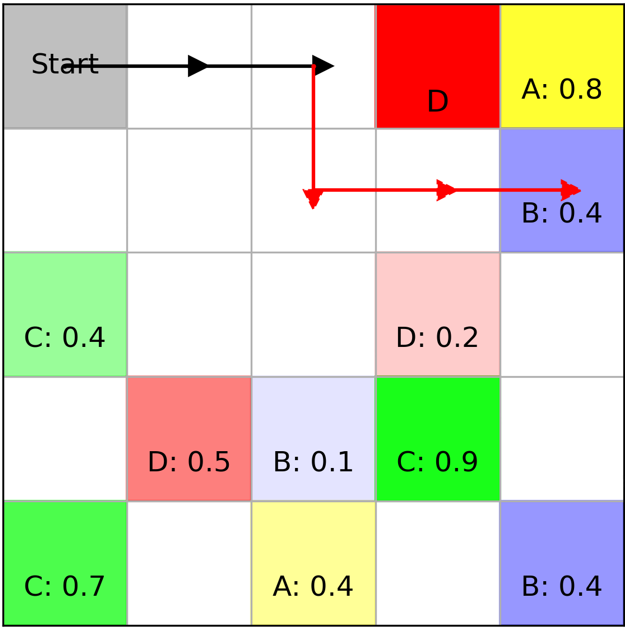}
            \caption{}
        \end{subfigure} &
        \begin{subfigure}[t]{0.25\columnwidth}
            \centering
            \includegraphics[width=\textwidth]{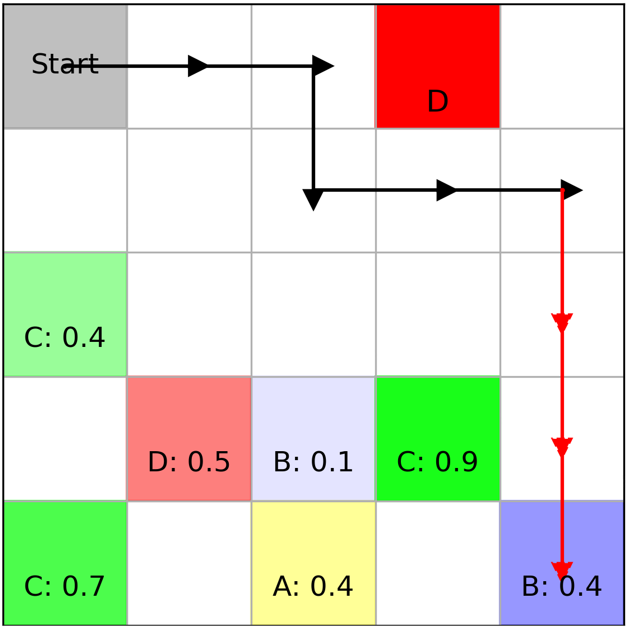}
            \caption{}
        \end{subfigure} \\
        \begin{subfigure}[t]{0.25\columnwidth}
            \centering
            \includegraphics[width=\textwidth]{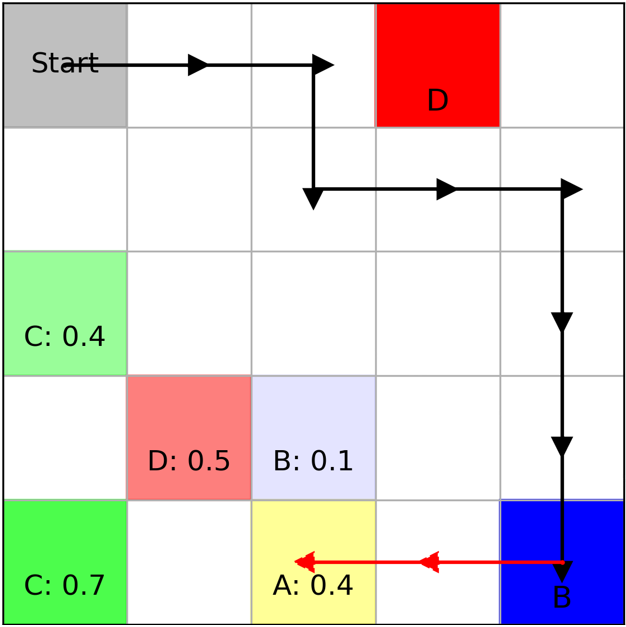}
            \caption{}
        \end{subfigure} &
        \begin{subfigure}[t]{0.25\columnwidth}
            \centering
            \includegraphics[width=\textwidth]{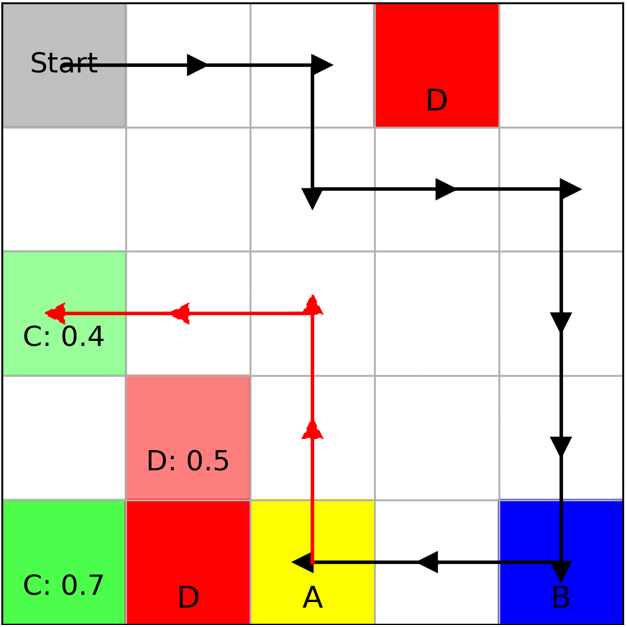}
            \caption{}
        \end{subfigure} &
        \begin{subfigure}[t]{0.25\columnwidth}
            \centering
            \includegraphics[width=\textwidth]{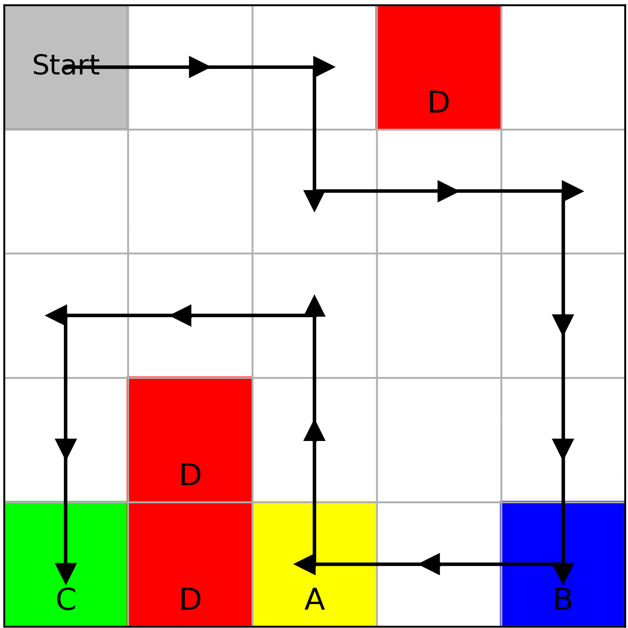}
            \caption{}
        \end{subfigure}
        
    \end{tabular}
    \caption{\scriptsize The illustrations of (a) the planned trajectory in \emph{red} based on the initial belief, (b,c,d,e) the re-planned trajectories in \emph{red} based on the current belief and the realized trajectory in \emph{black}, (f) the final trajectory accomplishing the mission.}
\label{fig:case1}    
\end{figure}

 
\color{black}
\subsubsection{Scalability}
We evaluate the scalability of the proposed algorithm by computing a policy to satisfy the scLTL task $\phi_2 = \lozenge(A \land \lozenge (B \land \lozenge (C)))$ (which implies visiting $A$, $B$, and $C$ in order) across different size environments. \textcolor{black}{TABLE}~\ref{tab:split_column_example} presents the time required for the product automaton construction and the policy calculation across various sizes of PL-DMDPs. As PL-DMDP size increases, the computation time grows, but only the policy computation occurs online. 
        
\begin{table}[!htbp]
\color{black}
\small
    \centering
    \resizebox{0.8\columnwidth}{!}{%
    \setlength{\tabcolsep}{4pt} 
    \renewcommand{\arraystretch}{1} 
    \begin{tabular}{|>{\centering\arraybackslash}p{2cm}|>{\centering\arraybackslash}p{2cm}|>{\centering\arraybackslash}p{1.3cm}|>{\centering\arraybackslash}p{1.3cm}|}
        \hline
        $\mathcal{M}$ & \multicolumn{2}{|c|}{$\mathcal{P}$} & $\pi_p$ \\ \hline
          Size        &  Size & Time [s]      &  Time [s]        \\ \hline

        $(100,460)$    & $(400, 4.6e3)$             & $0.12$ & $0.17$ \\ \hline
        $(400,1.92e3)$    & $(1.6e3, 1.92e4)$             & $1.76$  & $1.43$ \\ \hline
         $(900,4.38e3)$    & $(3.6e3, 4.38e4)$             & $7.22$  & $5.27$ \\ \hline
        $(1.6e3,7.84e3)$    & $(6.4e3, 7.84e4)$           & $30.04$ & $13.35$ \\ \hline
        $(2.5e3,1.23e4)$    & $(1e4, 1.23e5)$           & $67.69$ & $28.11$ \\ \hline
         
    \end{tabular}
    }
    \caption{\color{black}
Computation times for the product automaton construction and policy calculation under the same task $\phi_2$, evaluated for different sizes of $\mathcal{M}$ with $\gamma = 0.99$ and $\epsilon_0 = 0.01$.}
    \label{tab:split_column_example}
\end{table}
\color{black}
\subsubsection{Benchmark analysis}
\paragraph{Trajectory Performance}\color{black}
\textcolor{black}{We compare the performance of our algorithm against two other strategies in terms of the trajectory lengths required to satisfy the desired task $\phi_3 = \lozenge (\text{Pick-up} \land (\lozenge \text{Delivery}))$.}  \color{black}
\noindent \textbf{Map generation -} We conduct Monte Carlo simulations to generate 500 $6 \times 6$ true maps based on a fixed map with initial belief (as in Fig.~\ref{fig:monte_initial}). The process to generate a true map is as follows: a random number between $[0,1]$ is generated  to decide on the existence of a pick-up and/or delivery location according to the initial belief (yellow/blue locations and the corresponding \color{black}probability \color{black} thresholds). After this process, if there is not at least one pick-up and one delivery location, we randomly select one state from the set of possible pick-up locations and another from the set of possible delivery locations. These states are then assigned as pick-up and delivery, respectively, to satisfy Assumption \ref{non-zerp}. \color{black} 
\color{black}\noindent \textbf{Benchmark strategies -}
The \textcolor{black}{first} strategy is the offline method in \cite{8272366}. The \textcolor{black}{second} strategy builds upon \cite{8272366} by incorporating belief updates and replanning, a feature not present in \cite{8272366}. To ensure consistency with our scenarios, we make minor modifications to their model, such as using the same action sets and deterministic transitions. Note that both our method and \cite{8272366} use the same initial belief (as in Fig.~\ref{fig:monte_initial}).
{\noindent \textbf{Results -} 
\textcolor{black}{As shown in TABLE \ref{tab:statistics}, 
our algorithm completes the mission with an average trajectory length of $9.75$. Although the method in \cite{8272366} achieves an average trajectory length of $6$, it satisfies the task in only $139$ maps due to deadlocks, as explained in Section \Romannum{2}. Even after incorporating belief updates into \cite{8272366}, the modified approach results in a higher average trajectory length of $10.67$. 
\paragraph{Runtime Performance}
 We also compare our method against the approach in~\cite{8272366} with belief update, in terms of product automaton construction time and policy calculation time. We denote by $\mathcal{B}$ the percentage of states in $\mathcal{M}$ that have uncertain labels in the initial belief. For each such state $x \in X$, the initial belief is assigned as follows: $p_L(x,\{A\}) = p_L(x,\{B\}) = p_L(x,\{C\}) = p_L(x,\{A,B\}) = p_L(x,\{A,C\}) = p_L(x,\{B,C\}) = p_L(x,\{A,B,C\})= 0.1$, and $p_L(x,\{\}) = 0.3$. We conduct this comparison across varying values of $\mathcal{B}$ and different sizes of $\mathcal{M}$, using the task $\phi_4 = (\lozenge A) \land (\lozenge B) \land (\lozenge C)$ under the same initial belief  for both methods. As shown in TABLE \ref{table-time}, our method outperforms \cite{8272366} with belief update by achieving faster policy computation in all cases and faster product automaton construction in most cases. While our computation times show little variation with increasing $\mathcal{B}$, the approach in~\cite{8272366} becomes substantially slower as $\mathcal{B}$ grows. Note that this difference in computation time becomes significant when planning in large unknown environments, where the number of replanning steps may be on the order of hundreds.
 \color{black}
Overall, our proposed method strategically utilizes time-varying knowledge about the environment and provides efficient solutions to motion planning problems in semantically uncertain environments. 
}

}\color{black}

\begin{table}[!htbp]
    \centering
    \resizebox{0.85\columnwidth}{!}{%
    \setlength{\tabcolsep}{4pt} 
    \renewcommand{\arraystretch}{1.1} 
    \begin{tabular}{|>{\centering\arraybackslash}p{3.5cm}|>{\centering\arraybackslash}p{1cm}|>{\centering\arraybackslash}p{1.1cm}|>{\centering\arraybackslash}p{1.1cm}|>{\centering\arraybackslash}p{1.1cm}|>{\centering\arraybackslash}p{1.5cm}|>{\centering\arraybackslash}p{1.1cm}|}
        \hline
        \textbf{Method} & \textbf{Mean} & \textbf{Median} & \textbf{SD} & \textbf{Success} \\ \hline
        Our Algorithm            & 9.75  & 9.0  & 3.73   & 500 \\ \hline

        {\cite{8272366}}  & 6  & 6  & 0    & 139 \\ \hline
        {\cite{8272366}} + belief update & 10.67 & 8.5 & 5.43 & 500 \\ \hline
    \end{tabular}
    }
    \caption{\color{black} Comparison of our algorithm,  \cite{8272366} and \cite{8272366} with belief update based on trajectory length statistics (mean, median, and standard deviation) and mission success count across 500 maps. }
    \label{tab:statistics}
\end{table}

\begin{table}[!htbp]
\centering
\captionsetup{justification=centering}
\begin{subtable}[t]{\linewidth}
\centering
\resizebox{0.98\columnwidth}{!}{%
\setlength{\tabcolsep}{4pt}
\renewcommand{\arraystretch}{1.2}
\begin{tabular}{|>{\centering\arraybackslash}p{2.5cm}|
                >{\centering\arraybackslash}p{1.5cm}|
                >{\centering\arraybackslash}p{3cm}|
                >{\centering\arraybackslash}p{3cm}|
                >{\centering\arraybackslash}p{1.8cm}|
                >{\centering\arraybackslash}p{1.8cm}|}
\hline
$\boldsymbol{\mathcal{M}}$ (Size) & $\boldsymbol{\mathcal{B}}$ & \textbf{Method} & $\boldsymbol{\mathcal{P}}$ (Size) & $\mathcal{P}$ Time [s] & $\pi_p$ Time [s] \\ \hline

\multirow{8}{*}{(100,\ 460)} 
  & \multirow{2}{*}{25\%}  & Our algorithm           & $(800,\ 1.242\text{e}4)$   & $0.27$ & $0.24$ \\ \cline{3-6}
                           &                         & \cite{8272366} + belief update       & $(2.2\text{e}3,\ 3.689\text{e}4)$ & $0.39$ & $0.81$ \\ \cline{2-6}
  & \multirow{2}{*}{50\%}  & Our algorithm           & $(800,\ 1.242\text{e}4)$   & $0.27$ & $0.21$ \\ \cline{3-6}
                           &                         & \cite{8272366} + belief update       & $(3.6\text{e}3,\ 7.894\text{e}4)$ & $0.85$ & $1.77$ \\ \cline{2-6}
  & \multirow{2}{*}{75\%}  & Our algorithm           & $(800,\ 1.242\text{e}4)$   & $0.27$ & $0.19$ \\ \cline{3-6}
                           &                         & \cite{8272366} + belief update       & $(5\text{e}3,\ 1.507\text{e}5)$   & $1.65$ & $4.54$ \\ \cline{2-6}
  & \multirow{2}{*}{100\%} & Our algorithm           & $(800,\ 1.242\text{e}4)$   & $0.27$ & $0.18$ \\ \cline{3-6}
                           &                         & \cite{8272366} + belief update       & $(6.4\text{e}3,\ 2.355\text{e}5)$ & $2.47$ & $5.81$ \\ \hline
\end{tabular}
}
\caption{}
\end{subtable}

\vspace{0.3cm}

\begin{subtable}[!htp]{\linewidth}
\centering
\resizebox{0.98\columnwidth}{!}{%
\setlength{\tabcolsep}{4pt}
\renewcommand{\arraystretch}{1.2}
\begin{tabular}{|>{\centering\arraybackslash}p{2.5cm}|
                >{\centering\arraybackslash}p{1.5cm}|
                >{\centering\arraybackslash}p{3cm}|
                >{\centering\arraybackslash}p{3cm}|
                >{\centering\arraybackslash}p{1.8cm}|
                >{\centering\arraybackslash}p{1.8cm}|}
\hline
$\boldsymbol{\mathcal{M}}$ (Size) & $\boldsymbol{\mathcal{B}}$ & \textbf{Method} & $\boldsymbol{\mathcal{P}}$ (Size) & $\mathcal{P}$ Time [s] & $\pi_p$ Time [s] \\ \hline

\multirow{8}{*}{(400,\ 1.92e3)} 
  & \multirow{2}{*}{25\%}  & Our algorithm           & $(3.2\text{e}3,\ 5.184\text{e}4)$   & $3.83$ & $1.05$ \\ \cline{3-6}
                           &                         & \cite{8272366} + belief update       & $(8.8\text{e}3,\ 1.433\text{e}5)$ & $1.57$ & $3.53$ \\ \cline{2-6}
  & \multirow{2}{*}{50\%}  & Our algorithm           & $(3.2\text{e}3,\ 5.184\text{e}4)$   & $3.83$ & $1.00$ \\ \cline{3-6}
                           &                         & \cite{8272366} + belief update       & $(1.44\text{e}4,\ 3.464\text{e}5)$ & $3.66$ & $8.10$ \\ \cline{2-6}
  & \multirow{2}{*}{75\%}  & Our algorithm           & $(3.2\text{e}3,\ 5.184\text{e}4)$   & $3.85$ & $0.98$ \\ \cline{3-6}
                           &                         & \cite{8272366} + belief update       & $(2\text{e}4,\ 6.290\text{e}5)$     & $6.66$ & $20.3$ \\ \cline{2-6}
  & \multirow{2}{*}{100\%} & Our algorithm           & $(3.2\text{e}3,\ 5.184\text{e}4)$   & $3.85$ & $0.83$ \\ \cline{3-6}
                           &                         & \cite{8272366} + belief update       & $(2.56\text{e}4,\ 9.830\text{e}5)$ & $10.47$ & $44.40$ \\ \hline
\end{tabular}
}
\caption{}
\end{subtable}

\caption{Computation time comparison between our algorithm and \cite{8272366} with belief update, which shows product automaton construction time and policy computation times under task $\phi_4$ for (a) $\mathcal{M} = (100,\ 460)$ and (b) $\mathcal{M} = (400,\ 1.92\text{e}3)$, across varying $\mathcal{B}$ values.}
\label{table-time}
\end{table}

\begin{figure}[!htbp]
    \centering
    \includegraphics[width=0.2\textwidth]{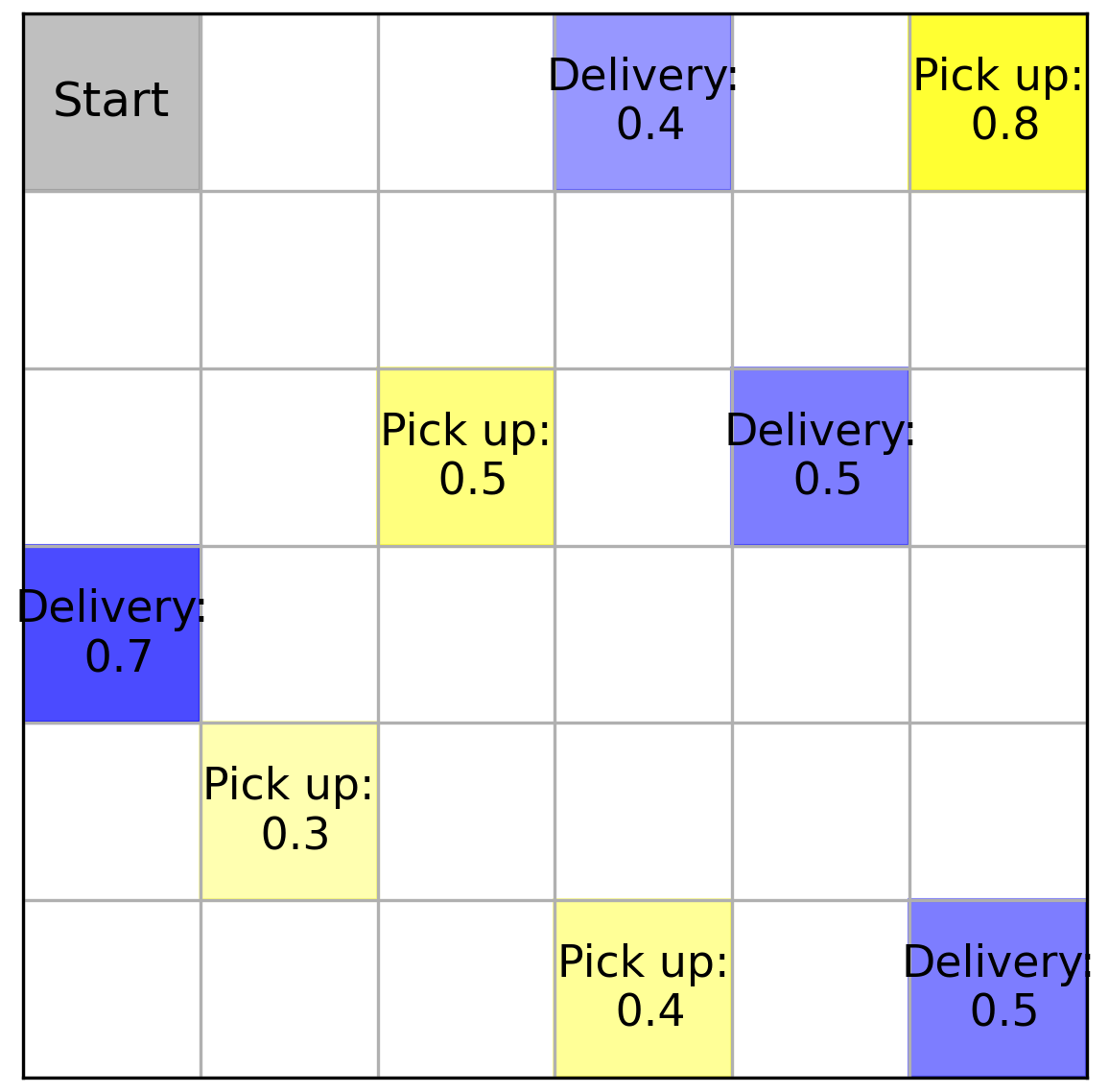}
    \caption{\scriptsize  A 6 by 6 grid map with initial information about the possible location of labels.}
    \label{fig:monte_initial}
\end{figure}

\color{black}

\noindent \textbf{Experiments:} We test  our algorithm for real-time planning with a DJI Tello quadrotor in scenarios such as pick-up/delivery and reach/avoid. We use the DJITelloPy \cite{djitellopy2024} package for low-level controls. We utilize AprilTags to accurately detect different labels like $Pickup$, $Delivery$, $Goal$, and $Danger$. This enables the quadrotor to update its map effectively by recognizing these tags during operation. \color{black} Additionally, we conduct a Gazebo simulation with the Scout Mini robot, which tackles a more complex task in a larger environment. \color{black} More details about the experiments can be found at the following link: \color{black} (\url{https://youtu.be/CfB4e2B5P8Q})\color{black}.
\color{black}

\section{Conclusion}\label{sec:conclusion}
We introduce an automata-theoretic framework to solve motion planning problems under syntactically co-safe Linear Temporal Logic (scLTL) specifications in environments with uncertain semantic maps. Our method utilizes value iteration techniques to generate a sequence of actions that satisfies the desired scLTL task. \color{black}By updating beliefs dynamically based on new information, our approach ensures adaptive and accurate planning. Our key contributions include a compact representation of the product automaton and the ability to replan based on newly discovered information. 
In future work, we aim to develop a scalable version of this approach to handle larger and more complex environments. \color{black} Additionally, we plan to investigate our problem in the reinforcement learning setting (e.g., \cite{cai2021reinforcement}) and focus on developing novel scalable methods that ensure task satisfaction even when the belief lacks sufficient information, by incorporating safe exploration techniques. \color{black}


\bibliographystyle{IEEEtran}
\bibliography{references}

\end{document}